\documentclass[sigconf]{acmart}

\author{Ren Kishimoto}
\authornote{Equal contribution.}
\affiliation{
  \institution{Institute of Science Tokyo}
  \city{Tokyo}
  \country{Japan}
}
\email{kishimoto.r.ab@m.titech.ac.jp}

\author{Koichi Tanaka}
\authornotemark[1]
\affiliation{
  \institution{Keio University}
  \city{Tokyo}
  \country{Japan}
}
\email{kouichi\_1207@keio.jp}

\author{Haruka Kiyohara}
\affiliation{
  \institution{Cornell University}
  \state{Ithaca}
  \country{USA}
}
\email{hk844@cornell.edu}

\author{Yusuke Narita}
\affiliation{
  \institution{Yale University}
  \city{New Haven, CT}
  \country{USA}
}
\email{yusuke.narita@yale.edu}

\author{Yasuo Yamamoto}
\affiliation{
  \institution{LY Corporation}
  \city{Tokyo}
  \country{Japan}
}
\email{yasyamam@lycorp.co.jp}

\author{Nobuyuki Shimizu}
\affiliation{
  \institution{LY Corporation}
  \city{Tokyo}
  \country{Japan}
}
\email{nobushim@lycorp.co.jp}

\author{Yuta Saito}
\affiliation{
  \institution{Hanjuku-kaso, Co., Ltd.}
  \city{Tokyo}
  \country{Japan}
}
\email{saito@hanjuku-kaso.com}

\renewcommand{\shortauthors}{Ren Kishimoto et al.}

\usepackage{amsthm}
\usepackage{amsmath}
\usepackage{mathtools}
\usepackage{graphicx}
\usepackage{subcaption}
\usepackage{algorithm}
\usepackage{algorithmic}
\usepackage{tcolorbox}

\theoremstyle{plain}
\newtheorem{condition}{Condition}[section]
\newtheorem{theorem}{Theorem}[section]
\newtheorem{corollary}{Corollary}[section]
\newtheorem{proposition}{Proposition}[section]

\DeclareMathOperator*{\argmax}{arg\,max}

\newcommand{\mE}{\mathbb{E}}
\newcommand{\mV}{\mathbb{V}}

\newcommand{\calD}{\mathcal{D}}
\newcommand{\calX}{\mathcal{X}}
\newcommand{\calA}{\mathcal{A}}

\newcommand{\nabt}{\nabla_{\theta}}
\newcommand{\score}{s_{\theta}}
\newcommand{\firstp}{\pi_{\theta}^{1st}}
\newcommand{\secondp}{\pi_{\phi}^{2nd}}

\newcommand{\overallpolicy}{\pi_{\theta,\phi}^{overall}(\boldsymbol{a}\,|\,\boldsymbol{x})}
\newcommand{\firstpolicy}{\pi_{\theta}^{1st}(\boldsymbol{a}_{1:k}\,|\,\boldsymbol{x})}
\newcommand{\secondpolicy}{\pi_{\phi}^{2nd}(\boldsymbol{a}_{k+1:L}\,|\,\boldsymbol{x},\boldsymbol{a}_{1:k})}

\newcommand{\loggingfirstpolicy}{\pi_{0}^{1st}(\boldsymbol{a}_{1:k}|\boldsymbol{x})}
\newcommand{\loggingfirstpolicyc}{\pi_{0}^{1st}(\boldsymbol{c}_{1:k}|\boldsymbol{x})}
\newcommand{\loggingsecondpolicy}{\pi_{0}^{2nd}(\boldsymbol{a}_{k+1:L}|\boldsymbol{x},\boldsymbol{a}_{1:k})}

\newcommand{\truePG}{V(\pi_{\theta,\phi}^{overall})}
\newcommand{\ips}{\widehat{V}_{\mathrm{IPS}}(\pi_\theta; \mathcal{D})}

\newcommand{\rpod}{\widehat{V}_{\mathrm{RPOD}}(\pi_{\theta,\phi}^{overall}; \mathcal{D})}

\newcommand{\bx}{\boldsymbol{x}}
\newcommand{\ba}{\boldsymbol{a}}
\newcommand{\bb}{\boldsymbol{b}}
\newcommand{\bc}{\boldsymbol{c}}
\newcommand{\br}{\boldsymbol{r}}

\newcommand{\vecr}{\boldsymbol{r}}

\newcommand{\indicator}[1]{\mathbb{I}\{#1\}}

\AtBeginDocument{
  \providecommand\BibTeX{{
    \normalfont B\kern-0.5em{\scshape i\kern-0.25em b}\kern-0.8em\TeX}}}

\copyrightyear{2026}
\acmYear{2026}
\setcopyright{cc}
\setcctype{by}
\acmConference[CIKM '26]{Proceedings of the 35th ACM International Conference on Information and Knowledge Management}{November 07--11, 2026}{Rome, Italy}
\acmBooktitle{Proceedings of the 35th ACM International Conference on Information and Knowledge Management (CIKM '26), November 07--11, 2026, Rome, Italy}
\acmDOI{10.1145/3799682.3840629}
\acmISBN{979-8-4007-2539-5/2026/11}

\ccsdesc[500]{Information systems~Retrieval models and ranking}

\keywords{off-policy learning, ranking policy, inverse propensity score.}

\begin{document}

\title[Efficient Offline Learning of Ranking Policies via Top-$k$ Policy Decomposition]{Efficient Offline Learning of Ranking Policies\\via Top-$k$ Policy Decomposition}

\renewcommand{\shortauthors}{Kishimoto, et al.}

\begin{abstract}
Many recommender systems such as for e-commerce and news platforms aim to provide users with rankings they are likely to interact with. \textit{Off-Policy Learning} (OPL) of ranking policies enables us to learn new ranking policies using only historical logged data. However, ranking settings make OPL remarkably challenging because their action spaces consist of permutations of unique items, being extremely large. Existing methods primarily use either policy- or regression-based approaches. The policy-based approach, which typically uses importance-weighted policy gradients, can suffer from high variance due to large action spaces. The regression-based approach, on the other hand, estimates the expected reward using conventional machine learning methods, avoiding variance issues but potentially suffering from severe bias. To circumvent these issues of existing methods, we propose a new OPL method for ranking, named \textbf{\textit{Ranking Policy Optimization via Top-$k$ Policy Decomposition (R-POD)}}, which combines the policy- and regression-based approaches in an effective fashion. Specifically, R-POD decomposes a ranking policy into a first-stage policy for selecting \textit{top-$k$} actions and a second-stage policy for choosing the bottom actions given the top-$k$ actions. It learns the first-stage policy using a new policy gradient estimator and the second-stage policy via the regression-based approach. This method can substantially reduce variance, since it applies importance weighting only to the top-$k$ actions. We also demonstrate that our policy-gradient estimator for the first-stage policy is unbiased under a \textit{conditional pairwise correctness} condition, which only requires that the expected reward differences of pairs of rankings sharing the same top-$k$ actions can be estimated correctly. Comprehensive experiments illustrate that R-POD provides substantial improvements in OPL for ranking.
\end{abstract}

\maketitle
\section{Introduction}
Intelligent systems in the real-world, such as recommender systems, search engines, and news applications, often present items (e.g., products, news, and jobs) in the form of rankings. In these systems, our goal is often to learn new ranking policies to improve outcomes, using only historical logged data collected by logging policies. 

This learning task is known as \textit{Off-Policy Learning} (OPL). OPL is highly relevant in many practical applications involving automated decision-making regarding ranking interface~\cite{saito2021counterfactual,li2018offline}.

The main approaches to OPL include policy-based and regression-based methods~\cite{saito2021counterfactual}. The policy-based approach learns new policies by estimating the policy gradient, often through importance-weighting~\cite{precup2000eligibility,dudik2011doubly}. Although this approach can be based on unbiased policy gradients and learns effective policies with sufficient logged data, it can be sample-inefficient particularly under large action spaces~\cite{peng2023offline,sachdeva2023off}. In ranking settings, in particular, where the action space corresponds to all possible permutations of items, most policy gradient estimators collapse due to extremely high variance~\cite{kiyohara2022doubly,saito2023off,saito2022off,peng2023offline}. To mitigate the variance issue caused by importance weighting in ranking setups, several methods introduce assumptions about user behavior such as independence~\cite{li2018offline} and cascade~\cite{mcinerney2020counterfactual}. While these methods succeed in reducing variance, restrictive assumptions can lead to large bias in policy gradient estimations~\cite{kiyohara2023off,kiyohara2022doubly,mcinerney2020counterfactual}. On the other hand, the regression-based approach learns the reward function and selects a ranking with the highest predicted reward. It can avoid variance issues but is known to suffer from high bias due to the difficulty of accurately modeling the rewards of every unique ranking in the action space.

To address the bias and variance issues caused by ranking action spaces, we develop a novel OPL algorithm for ranking called \textbf{\textit{Ranking Policy Optimization via Top-$k$ Policy Decomposition (R-POD)}}. The crux of R-POD is to decompose a ranking policy into a first-stage policy that chooses the best top-$k$ actions for each context and a second-stage policy that selects the best bottom actions given the top-$k$ actions. Leveraging this decomposition of a ranking policy, we learn the first-stage policy through the policy-based approach with a novel policy gradient estimator for ranking, called the R-POD gradient estimator. The R-POD gradient estimator leverages importance weighting in the top-$k$ action space to account for the effect of top-$k$ actions and a reward regression model to consider the effect of the bottom actions. We demonstrate that the R-POD gradient estimator is unbiased under a conditional pairwise correctness (CPC) condition, which only requires that the regression model accurately preserves the relative expected reward differences of rankings sharing the same top-$k$ actions. The second-stage policy of R-POD is then learned by the regression-based approach. We show that the second-stage policy can be based on a reward model that is used as a part of the R-POD gradient estimator for the first stage-policy, since the CPC condition ensures that the second-stage policy performs optimally in terms of choosing the bottom actions.

Compared to existing policy-based methods for ranking, our R-POD gradient estimator applies importance weighting only to the top-$k$ actions, substantially reducing variance, as we theoretically demonstrate. In addition, R-POD avoids introducing large bias to achieve substantial variance reduction, as it does not introduce any assumptions about user behavior. Moreover, compared to existing regression-based approaches, R-POD relaxes the modeling requirement regarding the reward function. Specifically, it only needs to accurately learn the relative value differences between pairs of unique rankings that have the same top-$k$ actions, a condition that is generally milder than learning the global expected rewards for every unique ranking. 
Comprehensive experiments on both synthetic and real-world ranking data illustrate that our R-POD algorithm performs more effectively than existing policy- and regression-based methods for a variety of experiment settings.

\section{Off-Policy Learning for Ranking}
This section formulates the problem of OPL for ranking policies and describes existing methods and their limitations. 

\subsection{Problem Formulation}
In our formulation of OPL for ranking, $\bx\in \calX\subseteq{\mathbb{R}^{d_x}}$ denotes a $d_x$-dimensional context vector (e.g., user demographics) drawn i.i.d. from an unknown distribution $p(\bx)$. The finite set of discrete (unique) actions is denoted as $\calA$, with $a\in\calA$ corresponding to an action like a single movie, song, news article, or product. Let $\ba = (a_1, a_2, \ldots, a_l, \ldots, a_L)$ be a ranking action vector, where $L$ denotes the length of the ranking. The function $\pi : \calX \to \Delta(\prod_{L}(\calA))$ is referred to as a \textit{ranking policy}, with $\prod_{L}(\calA)$ indicating the set of $L$-permutations of $\calA$, i.e., the ranking action space. Furthermore, $\br = (r_1, r_2, \ldots, r_l, \ldots, r_L)$ represents a reward vector, sampled from an unknown conditional distribution $p(\br|\bx,\ba)$, where $r_l$ is the reward observed at the $l$-th position. The effectiveness of a policy $\pi$ is measured through its \textit{value}, which is defined as follows.
\begin{align}
    \label{V_pi}
    V(\pi) \coloneq \mE_{p(\bx)\pi(\ba|\bx)}\left[\sum_{l=1}^{L} \alpha_{l}q_l(\bx,\ba)\right],
\end{align}
where $q_l(\bx,\ba) \coloneq \mE\left[r_l| \bx,\ba\right]$ represents the \textit{position-wise} expected reward function. Here, $\alpha_l$ is a non-negative weight given to each position. This definition of policy value in Eq. (\ref{V_pi}) can represent various additive ranking metrics. For instance, when $\alpha_l \coloneq 1/\log_{2}{(l+1)}$, it represents the discounted cumulative gain (DCG).

The logged data we can use for performing OPL is of the form: 
\begin{align*}
   \mathcal{D} \coloneq \{(\bx^{(i)},\ba^{(i)},\br^{(i)})\}_{i=1}^n,  
\end{align*}

which contains \textit{n} independent observations drawn from the logging policy $\pi_0$ as $(\bx,\ba,\br) \sim p(\bx)\pi_0(\ba|\bx)p(\br|\bx,\ba)$. In OPL of ranking policies, using only $\mathcal{D}$, we aim to optimize a ranking policy $\pi_{\theta}$, which is parameterized by $\theta$, to maximize the policy value as
\begin{align}
    \theta^* = \argmax_{\theta \in \Theta}\, V(\pi_{\theta}).
\end{align}
There exist two typical approaches to solve this policy learning task, namely the policy- and regression-based approaches, as described in detail in the following.

\subsection{Limitations of Existing Methods}

Firstly, \textbf{the policy-based approach} aims to learn the policy parameter $\theta$ via gradient ascent, $\theta_{t+1} \leftarrow \theta_{t} + \eta \nabt V(\pi_\theta)$, where
\begin{align}
    \nabt V(\pi_\theta) := \mE_{p(\bx)\pi_{\theta}(\ba|\bx)}
    \left[ \left(\sum_{l=1}^{L}\alpha_{l} q_l(\bx,\ba) \right)
    \nabt \log{\pi_\theta(\ba|\bx )} \right] \label{eq:true-pg}
\end{align}
is called the policy gradient (we can derive it via the log-derivative trick, i.e., $\nabt  \pi_{\theta} = \pi_{\theta} \nabt  \log\pi_{\theta}$). The problem here is that we do not know the true policy gradient $\nabt V(\pi_\theta)$ since we do not know the true reward functions $\{q_l(\bx,\ba)\}_{l=1}^L$. Therefore, we need to estimate the policy gradient with only available logged data $\mathcal{D}$. A standard approach to do it is to apply \textit{inverse propensity scoring (IPS)} as
\begin{align}
    \label{eq:ips}
    \nabt\ips 
    \coloneq \frac{1}{n}\sum_{i=1}^{n} w(\bx^{(i)},\ba^{(i)})
\left(\sum_{l=1}^{L}\alpha_{l}r_{l}^{(i)}\right) \score(\bx^{(i)},\ba^{(i)}),
\end{align}
where $w(\bx,\ba) \coloneq \pi_\theta(\ba|\bx) / \pi_0(\ba|\bx)$ is called the \textit{ranking-level} importance weight, which is defined as the ratio of probabilities that a unique ranking $\ba$ is chosen under two different policies. We also use $\score(\bx,\ba) \coloneq \nabt \log{\pi_\theta(\ba|\bx)}$ to denote the policy score function in Eq.~\eqref{eq:ips}. It is widely known that the IPS gradient estimator given above is unbiased under the full support condition.
\begin{condition}
    (Full Support) The logging policy $\pi_0$ is said to have full support if $\pi_0(\ba|\bx) > 0$ for all $\ba \in \prod_{L}(\calA)$ and $\bx \in \calX$. \label{ass:full_support}
\end{condition}
Unfortunately, in the ranking problem, the full support condition is often hard to guarantee due to a large number of unique rankings~\cite{saito2023off}, potentially resulting in substantial bias for IPS~\cite{sachdeva2020off,felicioni2022off,saito2022off}. We also describe the variance of the IPS gradient estimator to highlight its more critical issue in the following.
\begin{align}
&n\mV_{\mathcal{D}}\left[\nabt\widehat{V}_{\mathrm{IPS}}^{(j)}(\pi_\theta ; \mathcal{D}) \right] \notag \\
    &=\mE_{p(\bx)\pi_0(\ba|\bx)}\left[\left(w(\bx,\ba)s_{\theta}^{(j)}(\bx,\ba)\right)^2 \sigma^2(\bx,\ba)\right]\notag\\
    &\quad+\mE_{p(\boldsymbol{{x}})}\left[\mV_{\pi_0(\ba|\bx)}\left[w(\bx,\ba)\left(\sum_{l=1}^{L}\alpha_{l}q_{l}(\bx,\ba)\right)s_{\theta}^{(j)}(\bx,\ba)\right]\right] \notag\\    
    &\quad+\mV_{p(\bx)}\left[\mE_{\pi_0(\ba|\bx)}\left[w(\bx,\ba)\left(\sum_{l=1}^{L}\alpha_{l}q_{l}(\bx,\ba)\right)s_{\theta}^{(j)}(\bx,\ba)\right]\right], \label{eq:ips-variance}
\end{align}
where $\sigma^2(\bx,\ba) \coloneq \mV[(\sum_{l=1}^{L}\alpha_{l}r_{l})|\bx,\ba]$ is the conditional variance of the ranking metric and $s_{\theta}^{(j)}(\bx,\ba)$ is the \textit{j}-th dimension of the score function. Eq.~\eqref{eq:ips-variance} indicates that the variance can become excessively large when the importance weights $w(\bx,\ba)$ take a large value. Indeed, $\pi_0(\ba|\bx)$ often becomes extremely small, particularly when it satisfies the requirement of the full support condition in ranking action spaces, leading to large variation of the weights. 

To deal with the variance issue of IPS, we can possibly apply the \textit{Doubly Robust (DR)} estimator~\cite{Dud_k_2014}, which uses a reward estimator $\hat{f}(\bx, \ba)\approx \sum_{l=1}^{L}\alpha_{l} q_l(\bx,\ba) $ as a control variate. 
DR can reduce the estimation variance compared to the IPS gradient estimator. 
However, DR still suffers from extremely high variance due to the use of ranking-level importance weighting~\cite{kiyohara2022doubly}.

To address the high variance caused by ranking-level importance weighting, some methods introduce assumptions about user behavior~\cite{li2018offline,mcinerney2020counterfactual,kiyohara2022doubly,kiyohara2023off}. For instance, \citet{li2018offline} and \citet{liu2022practical} leverage the independence assumption, which assumes that users interact with the item presented at each position independently from the other items, and propose the Independent IPS (IIPS) estimator. In contrast, Reward-interaction IPS (RIPS) estimator is based on the cascade assumption, which posits that users interact with items one by one from the top position~\cite{mcinerney2020counterfactual}.
These estimators are unbiased under their underlying assumptions and substantially reduce variance compared to IPS and DR. However, these gradient estimators can exhibit high bias under the violation of their respective behavior assumptions~\cite{kiyohara2023off}.

Secondly, \textbf{the regression-based approach} estimates the reward function as 
$\hat{q}_l(\bx,\ba) \approx q_l(\bx,\ba), \, \forall l\,$ using conventional supervised machine learning methods. It then converts the estimated reward functions $\{\hat{q}_l(\bx,\ba) \}_{l=1}^L$ into a ranking policy, for example, by applying the argmax operator as below.
\begin{align*}
    \pi (\ba \,|\,\bx) :=  \left\{
    \begin{array}{ll}
        1 & (\ba = \argmax_{\ba'\in \Pi_L(\calA) } \sum_{l=1}^L \alpha_l \hat{q}_l(\bx,\ba')) \\
        0 & (\text{otherwise})
    \end{array}
    \right.
\end{align*}
Regarding the variance, this approach is superior to the policy-based approach as it does not involve importance weighting. However, it can suffer from high bias due to the difficulty of accurately regressing the expected rewards for every unique ranking, i.e., $\forall \ba \in \Pi_L(\calA)$, based only on partial feedback in the logged data $\calD$.

It is worth noting that there exists a relevant but orthogonal research direction called \textit{Unbiased Learning-to-Rank} (ULtR)~\cite{joachims2017unbiased,ai2018unbiased,wang2018position}, which aims to optimize the ranking of items based on implicit feedback. The typical challenge of ULtR is how to deal with position bias. On the other hand, OPL considers not just position bias but also selection bias coming from the logging policy $\pi_0$. This point differentiates ranking OPL technically and substantially from ULtR, and our focus is solely on ranking OPL. 

As discussed in this section, classic approaches to OPL are ineffective in the ranking setup. To achieve more efficient OPL even in the ranking problem, the following develops a novel algorithm that circumvents the variance issue of the policy-based approach and the bias issue of the regression-based approach simultaneously.

\begin{figure}[t]
     \centering
     \includegraphics[scale=0.27]{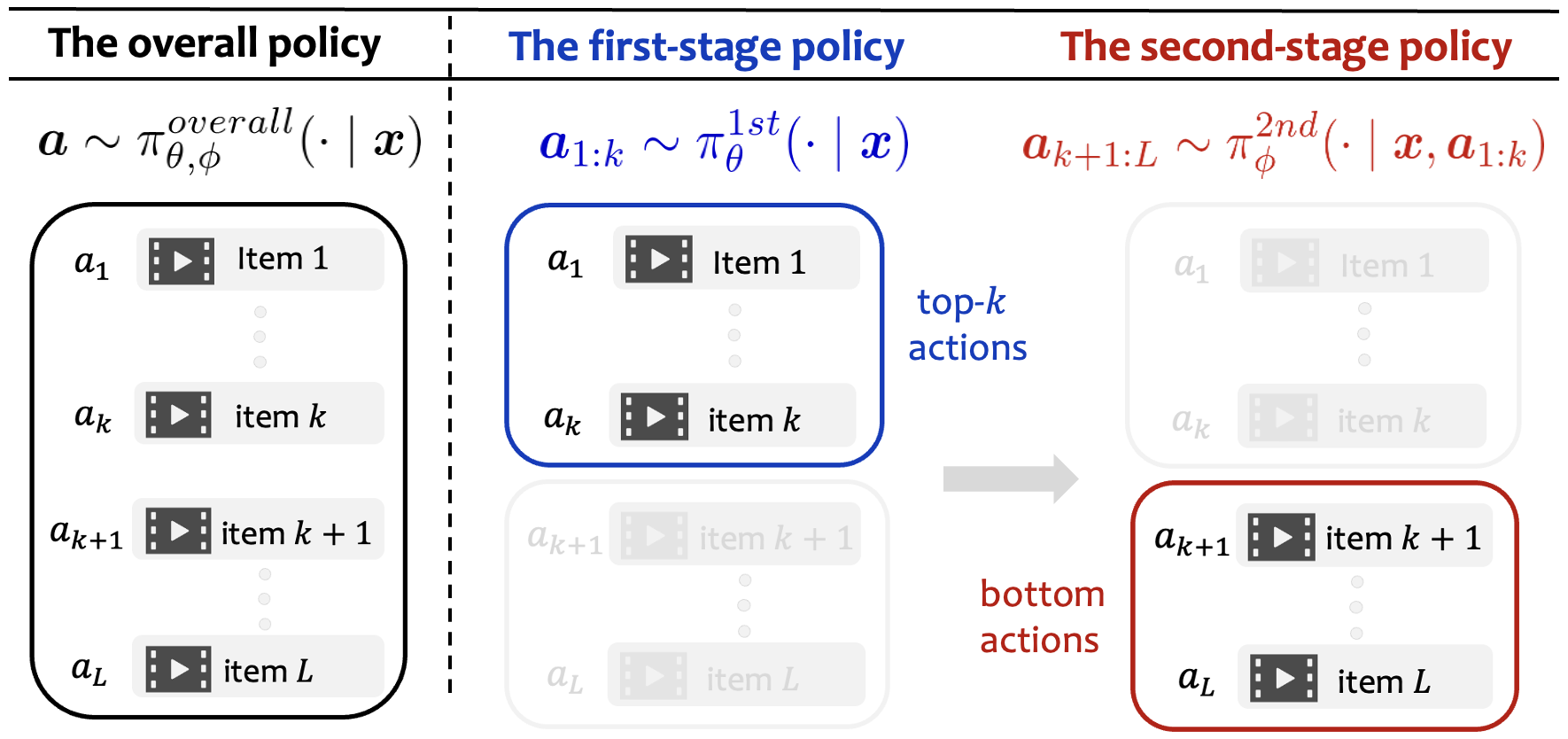}
     \caption{R-POD leverages the concept of top-$k$ policy decomposition in Eq.~\eqref{eq:policy-decomposition} to learn better ranking policies offline.}
     \label{topk decomposition}
     \vspace{-3mm}
\end{figure}

\section{The ``R-POD'' Algorithm}
This section introduces a novel OPL method called \textbf{R-POD}. Its core concept involves decomposing a ranking policy into two components: a first-stage policy and a second-stage policy, as follows.

\begin{tcolorbox}
\large{\textbf{The Top-$k$ Policy Decomposition:}}\normalsize{}
\begin{align}
    \overallpolicy = \firstpolicy \cdot \secondpolicy, 
    \label{eq:policy-decomposition}
\end{align}
where $\ba_{k_1:k_2} \coloneq (a_{k_1},a_{k_1+1},\cdots,a_{k_2-1},a_{k_2})$. As depicted in Figure~\ref{topk decomposition}, the first-stage policy is a segment of the overall policy responsible for selecting the top-$k$ actions ($\ba_{1:k}$). In contrast, the second-stage policy is a segment to choose the bottom actions ($\ba_{k+1:L}$), conditional on the top-$k$ actions already sampled by the first-stage policy.
\end{tcolorbox}

Higher-ranked items are more crucial for producing effective rankings. Hence, we consider optimizing the first-stage policy using a method with low bias, and then the second-stage policy using a method with low variance to control the overall variance of the algorithm. With this idea in mind, our R-POD algorithm for ranking learns an overall policy in two separate stages. In the first stage, it optimizes a first-stage policy through the policy-based approach within the top-$k$ action space. By applying importance weighting exclusively to the top-$k$ action space, which is considerably smaller than the entire ranking space, we can learn the first-stage policy with significantly reduced variance. Then, we optimize the second-stage policy through the regression-based approach. By applying the regression model only to the remaining action space, conditional on a set of top-$k$ actions, we can learn the second-stage policy with smaller bias than the conventional regression-based approach while taking advantage of its low variance. In the following, we describe how to learn first- and second-stage policies from the logged data $\calD$ in order to improve the value of the overall policy.

\subsection{Optimizing the First-stage Policy $\pi_{\theta}^{1st}$}
We first consider optimizing the first-stage policy $\pi_{\theta}^{1st}$, parameterized by $\theta$, via the policy-based approach given a second-stage policy $\pi_{\phi}^{2nd}$. We consider learning the first-stage policy under a pre-trained second-stage policy because the overall policy is dependent both on the first- and second-stage policies. This implies that the optimal first-stage policy becomes different given a different second-stage policy, as we will describe below. 

Given that our ultimate goal is training a better overall policy, we should train the first-stage policy so that the overall policy $\pi_{\theta,\phi}^{overall}$ is improved.
Hence, given a second-stage policy $\pi_{\phi}^{2nd}$, we aim to update the first-stage policy parameter $\theta$ as
\begin{align}
    \theta_{t+1} \leftarrow \theta_{t} + \eta \nabt \truePG
\end{align}
The following derives the policy gradient for the overall policy regarding the first-stage policy parameter, i.e., $\nabt \truePG$.

\begin{proposition} (The overall policy gradient)
    The policy gradient for the overall policy regarding the first-stage policy parameter, i.e., $\nabt \truePG$, is given as follows.
    \begin{align}
        \label{eq:truePG}
        \nabt \truePG &=\mE_{p(\bx)\pi_{\theta}^{1st}(\ba_{1:k}|\bx)}\left[q^{\pi_{\phi}^{2nd}}(\bx,\ba_{1:k}) \score(\bx,\ba_{1:k}) \right],
    \end{align}
    where $q^{\pi_{\phi}^{2nd}}(\bx,\ba_{1:k}) \coloneq \mE_{\pi_{\phi}^{2nd}
    }\left[\sum_{l=1}^{L}\alpha_{l}q_{l}(\bx,\ba)\right]$ denotes the value of top-$k$ actions under $\pi_{\phi}^{2nd}$ and 
    $\score(\bx,\ba_{1:k}) \coloneq \nabt \log{\pi_{\theta}^{1st}(\ba_{1:k}|\bx)}$ is the policy score function of the first-stage policy. See Appendix~\ref{derivation of overall gra} for the proof.
    \label{prop:overall policy gradient}
\end{proposition}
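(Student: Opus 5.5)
The plan is a direct computation: write the value of the overall policy as an iterated expectation over the decomposition in Eq.~\eqref{eq:policy-decomposition}, and observe that $\theta$ enters only through the first-stage factor. First, I expand the definition in Eq.~\eqref{V_pi} using $\overallpolicy = \firstpolicy \cdot \secondpolicy$, and split the sum over $\ba \in \prod_L(\calA)$ into a sum over the top-$k$ prefix $\ba_{1:k}$ and a sum over the completion $\ba_{k+1:L}$. This gives
\begin{align*}
\truePG = \mE_{p(\bx)}\Bigl[\sum_{\ba_{1:k}} \firstpolicy \sum_{\ba_{k+1:L}} \secondpolicy \sum_{l=1}^{L}\alpha_l q_l(\bx,\ba)\Bigr].
\end{align*}
The inner sum over $\ba_{k+1:L}$ is exactly $q^{\pi_{\phi}^{2nd}}(\bx,\ba_{1:k})$. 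Hence $\truePG = \mE_{p(\bx)}\bigl[\sum_{\ba_{1:k}} \firstpolicy\, q^{\pi_{\phi}^{2nd}}(\bx,\ba_{1:k})\bigr]$.

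Second, I differentiate with respect to $\theta$. The context distribution $p(\bx)$, the second-stage policy $\pi_{\phi}^{2nd}$, and the reward functions $q_l$ do not depend on $\theta$. Therefore $q^{\pi_{\phi}^{2nd}}(\bx,\ba_{1:k})$ is constant in $\theta$, and the gradient falls only on $\firstpolicy$. I then apply the log-derivative trick $\nabt \pi_{\theta}^{1st} = \pi_{\theta}^{1st}\,\nabt \log \pi_{\theta}^{1st}$, as already used for Eq.~\eqref{eq:true-pg}. This turns the sum back into an expectation under $\pi_{\theta}^{1st}$ with the factor $\score(\bx,\ba_{1:k})$, which is exactly Eq.~\eqref{eq:truePG}.

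The only step needing care is interchanging $\nabt$ with the expectation over $p(\bx)$ and the sum over $\ba_{1:k}$. The action space $\prod_L(\calA)$ is finite, so the sum over prefixes is finite and poses no difficulty. For the integral over $\bx$, I assume the standard regularity conditions that are implicit in Eq.~\eqref{eq:true-pg}: differentiability of $\pi_\theta^{1st}$ in $\theta$ and bounded rewards together with a dominating bound on the gradient, so that dominated convergence applies. A small point is also needed for the log-derivative trick when $\firstpolicy = 0$. Here I use the convention that such prefixes contribute zero to both sides, or I assume the first-stage policy has positive probabilities, such as a softmax. I expect this regularity justification to be the only real obstacle; the rest is bookkeeping with the decomposition.
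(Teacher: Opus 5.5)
Your proposal is correct and follows essentially the same route as the paper's proof in Appendix~\ref{derivation of overall gra}: you factor the overall policy via the top-$k$ decomposition, note that only $\pi_{\theta}^{1st}$ depends on $\theta$, apply the log-derivative trick, and identify the inner expectation over $\ba_{k+1:L}$ as $q^{\pi_{\phi}^{2nd}}(\bx,\ba_{1:k})$. The only differences are cosmetic: you collapse the inner sum before differentiating, whereas the paper differentiates first. Your remarks on interchanging $\nabt$ with the expectation and on zero-probability prefixes add rigor that the paper leaves implicit.
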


Proposition~\ref{prop:overall policy gradient} suggests that if the first-stage policy can choose the top-$k$ actions that are evaluated highly according to the function $q^{\pi_{\phi}^{2nd}}(\bx,\ba_{1:k})$, we can improve the effectiveness of the overall policy. An interesting observation here is that the top-$k$ actions that the first-stage policy should choose are different given different second-stage policies as implied by the fact that the function $q^{\pi_{\phi}^{2nd}}(\bx,\ba_{1:k})$ is dependent on $\pi_{\phi}^{2nd}$. This is indeed the reason why we consider training the first-stage policy given a (pre-trained) second-stage policy. 

As discussed, the true $\nabt \truePG$ would lead to an improved overall policy, however, we cannot know the ground-truth policy gradient given in Eq.~\eqref{eq:truePG} due to the inability to know $q^{\pi_{\phi}^{2nd}}(\bx,\ba_{1:k})$, so we have to estimate it using logged data to train the first-stage policy. To achieve this, we propose a new policy gradient estimator, called the \textbf{R-POD gradient estimator}, defined as follows.
\begin{align}
    &\nabt\rpod \notag \\
    &\coloneq 
    \frac{1}{n} \sum_{i=1}^{n} \bigg\{ w(\bx^{(i)},\ba_{1:k}^{(i)})
    \left(\sum_{l=1}^{L}\alpha_{l}r_{l}^{(i)} - \hat{f}(\bx^{(i)},\ba^{(i)})\right)\notag \\
    &\hspace{60mm}\times\score(\bx^{(i)},\ba_{1:k}^{(i)})\notag\\
    &\qquad\quad + \mE_{\pi_{\theta}^{1st}(\ba_{1:k}|\bx^{(i)})}
    \left[\hat{f}^{\pi_{\phi}^{2nd}}(\bx^{(i)},\ba_{1:k})\score(\bx^{(i)},\ba_{1:k})\right]\bigg\}, \label{eq:rpod-pg}
\end{align}
where $w(\bx,\ba_{1:k}) \coloneq \frac{\pi_{\theta}^{1st}(\ba_{1:k}|\bx)}{\pi_{0}^{1st}(\ba_{1:k}|\bx)}$ is the \textit{top-$k$ importance weight}. Specifically, the first term of Eq.~\eqref{eq:rpod-pg} estimates the value of top-$k$ actions $\ba_{1:k}$ via importance weighting and the second term deals with the value of bottom actions ($\ba_{k+1:L}$) via the regression model $\hat{f}(\bx,\ba)$. Since the top-$k$ importance weight considers only the difference in probabilities of choosing the top-$k$ actions between policies, it is expected to have much lower variance than existing gradient estimators such as IPS, DR, and RIPS. Additionally, it does not introduce any assumption on user behavior such as independence~\cite{li2018offline} or cascade~\cite{mcinerney2020counterfactual}, so it does not produce large bias regarding the violation of such assumptions. Note that we will discuss how we should optimize the regression model $\hat{f}(\bx,\ba)$ based on the analysis of the R-POD estimator provided below.

As a theoretical analysis, we first characterize the bias of the R-POD gradient estimator under the full ``top-$k$'' support condition, which is less restrictive than the full support condition (Condition~\ref{ass:full_support}) needed for the unbiasedness of IPS.
\begin{condition}
    \label{eq:full_top-k_support}
    (Full top-$k$ support) The logging policy $\pi_0$ satisfies full top-$k$ support if $\pi_0(\ba_{1:k}|\bx) > 0$ for all $\ba \in \prod_{L}(\calA)$ and $\bx\in \calX$.
\end{condition}

\begin{theorem}
    (Bias of the R-POD gradient estimator) When Condition \ref{eq:full_top-k_support} is true, the R-POD gradient estimator has the following bias for a given regression model $\hat{f}(\bx,\ba)$.
    \begin{align}
         \!\!\!&Bias(\nabt\rpod) \notag \\
         \!\!\!&= \mE_{\pi_{0}^{1st}(\bx,\bc_{1:k})}
        \bigg[\smashoperator[r]{\sum_{\ba<\bb : \ba_{1:k}=\bb_{1:k}=\bc_{1:k}}}
        \pi_{0}^{2nd}(\ba_{k+1:L}|\bx,\bc_{1:k})
        \pi_{0}^{2nd}(\bb_{k+1:L}|\bx,\bc_{1:k})\notag\\
        \!\!\!& \times{(\Delta_{q}(\bx,\ba,\bb)-\Delta_{\hat{f}}(\bx,\ba,\bb))}
        {\left(\frac{\pi_{\theta}(\bb|\bx,\bc_{1:k})}
        {\pi_{0}(\bb|\bx,\bc_{1:k})}
        \!-\!\frac{\pi_{\theta}(\ba|\bx,\bc_{1:k})}
        {\pi_{0}(\ba|\bx,\bc_{1:k})}\right)} \notag \\
        &\hspace{50mm}\times\score(\bx,\bc_{1:k}) \bigg], 
        \label{eq:bias_of_rpod-pg} 
    \end{align}
    where $\ba,\bb \in \prod_{L}(\calA)$. $\Delta_{q}(\bx,\ba,\bb) \coloneq q(\bx,\ba)-q(\bx,\bb)$ represents the difference of the expected rewards between a pair of rankings $\ba$ and $\bb$ given $\bx$, which we call \textbf{the relative value difference of rankings}. $\Delta_{\hat{f}}(\bx,\ba,\bb) \coloneq \hat{f}(\bx,\ba)-\hat{f}(\bx,\bb)$ is a relative value of rankings between $\ba$ and $\bb$ given $\bx$ estimated by the regression model $\hat{f}(\bx,\ba)$.
    \label{thm:bias_of_rpod-pg}
\end{theorem}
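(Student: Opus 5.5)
The plan is to compute the expectation of the estimator in Eq.~\eqref{eq:rpod-pg} under the logging distribution and subtract the true gradient of Proposition~\ref{prop:overall policy gradient}. This gives an expression of the form $\mE_{\pi_\theta^{1st}}[(\text{difference of two second-stage expectations})\,\score]$. I then rewrite the inner difference in pairwise form using an elementary identity.

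\emph{Step 1 (expectation of the first term).} The sample is i.i.d., so it suffices to take one sample. I factor the logging policy as $\pi_0(\ba|\bx)=\loggingfirstpolicy\,\loggingsecondpolicy$, in analogy with Eq.~\eqref{eq:policy-decomposition}, and use $\mE[\sum_l\alpha_l r_l\,|\,\bx,\ba]=q(\bx,\ba)\coloneq\sum_l\alpha_l q_l(\bx,\ba)$. Conditioning on $\bc_{1:k}=\ba_{1:k}$, the first term has expectation
\[
\mE_{p(\bx)\pi_0^{1st}(\bc_{1:k}|\bx)}\Big[w(\bx,\bc_{1:k})\,\mE_{\pi_0^{2nd}(\cdot|\bx,\bc_{1:k})}\big[q(\bx,\ba)-\hat f(\bx,\ba)\big]\,\score(\bx,\bc_{1:k})\Big].
\]
Condition~\ref{eq:full_top-k_support} makes the top-$k$ weight well defined. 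It also makes the change of measure exact, giving $\mE_{p(\bx)\pi_\theta^{1st}}\big[\mE_{\pi_0^{2nd}}[q-\hat f]\,\score\big]$.

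\emph{Step 2 (combine with the second term and the true gradient).} I write $\hat f^{\pi_\phi^{2nd}}(\bx,\bc_{1:k})=\mE_{\pi_\phi^{2nd}}[\hat f(\bx,\ba)]$, and Proposition~\ref{prop:overall policy gradient} gives the truth as $\mE_{\pi_\theta^{1st}}[\mE_{\pi_\phi^{2nd}}[q]\,\score]$. With $g\coloneq q-\hat f$, the bias is then
\[
\mE_{p(\bx)\pi_\theta^{1st}(\bc_{1:k}|\bx)}\Big[\textstyle\sum_{\ba:\ba_{1:k}=\bc_{1:k}}\big(\pi_0^{2nd}(\ba_{k+1:L}|\bx,\bc_{1:k})-\pi_\phi^{2nd}(\ba_{k+1:L}|\bx,\bc_{1:k})\big)\,g(\bx,\ba)\,\score(\bx,\bc_{1:k})\Big].
\]

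\emph{Step 3 (pairwise rewriting, the key step).} Let $p$ and $\pi$ be any two probability vectors over the same finite set, with $p>0$ where ratios are used. Then
\[
\textstyle\sum_a (p(a)-\pi(a))g(a)=\sum_{a<b}p(a)p(b)\,(g(a)-g(b))\Big(\frac{\pi(b)}{p(b)}-\frac{\pi(a)}{p(a)}\Big).
\]
To verify this, write the right side as $\tfrac12\sum_{a\neq b}(g(a)-g(b))(p(a)\pi(b)-\pi(a)p(b))$. Using $\sum_{b\neq a}\pi(b)=1-\pi(a)$ and $\sum_{b\neq a}p(b)=1-p(a)$, this collapses to $\sum_a g(a)(p(a)-\pi(a))$. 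Apply it with $p=\pi_0^{2nd}(\cdot|\bx,\bc_{1:k})$ and $\pi=\pi_\phi^{2nd}(\cdot|\bx,\bc_{1:k})$, over rankings with top-$k$ equal to $\bc_{1:k}$. Then $g(a)-g(b)=\Delta_q(\bx,\ba,\bb)-\Delta_{\hat f}(\bx,\ba,\bb)$, which produces exactly the summand of Eq.~\eqref{eq:bias_of_rpod-pg}.

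\emph{Step 4 (matching the outer measure).} The statement writes the outer expectation under $\pi_0^{1st}$ and a ratio $\pi_\theta(\bb|\bx,\bc_{1:k})/\pi_0(\bb|\bx,\bc_{1:k})$. I will read this ratio as the overall-policy ratio on rankings sharing $\bc_{1:k}$, namely $w(\bx,\bc_{1:k})\,\pi_\phi^{2nd}/\pi_0^{2nd}$. I then reverse the change of measure of Step 1 to move from $\mE_{\pi_\theta^{1st}}$ back to $\mE_{\pi_0^{1st}}$.

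I expect the main obstacle to be this notational reconciliation, together with the support issue. The pairwise ratios require $\pi_0^{2nd}(\cdot|\bx,\bc_{1:k})>0$ on the relevant rankings. If that fails, I would restrict the sums to the support of $\pi_0^{2nd}$ and treat the remaining mass of $\pi_\phi^{2nd}$ as a separate term, or adopt the convention that pairs with zero logging probability contribute nothing.
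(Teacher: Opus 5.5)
Your proposal is correct and follows essentially the paper's proof. The paper stays under $\pi_0^{1st}$ throughout and rewrites $w(\bx,\bc_{1:k})\,\pi_{\phi}^{2nd}(\ba_{k+1:L}|\bx,\bc_{1:k})$ as $w(\bx,\ba)\,\pi_0^{2nd}(\ba_{k+1:L}|\bx,\bc_{1:k})$, with $w(\bx,\ba)$ the ranking-level weight; this is exactly your Step~4 reading of the ratio. It then cites Lemma~B.1 of Saito and Joachims (2022) for the pairwise identity you prove directly in Step~3, so your detour through $\pi_{\theta}^{1st}$ and back is cosmetic.

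One caution on your fallback when $\pi_0^{2nd}$ vanishes somewhere. Dropping such pairs would lose the terms $\pi_{\phi}^{2nd}(\bb_{k+1:L}|\bx,\bc_{1:k})\bigl(\mE_{\pi_0^{2nd}}[g]-g(\bx,\bb)\bigr)$. Instead, read $p(\ba)p(\bb)\bigl(\pi(\bb)/p(\bb)-\pi(\ba)/p(\ba)\bigr)$ as $p(\ba)\pi(\bb)-\pi(\ba)p(\bb)$; with that reading your identity holds verbatim and needs no extra support assumption.
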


The most important factor in Eq.~\eqref{eq:bias_of_rpod-pg} is $\Delta_{q}(\bx,\ba,\bb) - \Delta_{\hat{f}}(\bx,\ba,\bb)$, which implies that, when a regression model $\hat{f}(\bx,\ba)$ accurately preserves the relative value differences of rankings containing the same top-$k$ actions, the bias of the R-POD gradient estimator becomes small. 
Intuitively, the R-POD gradient estimator already unbiasedly estimates the value of top-$k$ actions via its top-$k$ importance weighting, and thus it is sufficient for the regression model to identify only relative value differences of rankings given the same top-$k$ actions to make the gradient estimator unbiased. Moreover, Theorem~\ref{thm:bias_of_rpod-pg} implies that the R-POD gradient estimator becomes unbiased under the following Conditional Pairwise Correctness (CPC) condition.
\begin{condition}
    \label{ass:pair_correct}
    (Conditional Pairwise Correctness; CPC) A regression model $\hat{f}(\bx,\ba)$ satisfies conditional pairwise correctness if $\Delta_{q}(\bx,\ba,\bb) = \Delta_{\hat{f}}(\bx,\ba,\bb)$ for all $x \in \calX$ and $\ba,\bb \in \prod_L(\calA)$ s.t. $\ba_{1:k} = \bb_{1:k}$.
\end{condition}

\begin{corollary}
    Under Conditions~\ref{eq:full_top-k_support} and~\ref{ass:pair_correct}, the R-POD gradient estimator is unbiased, i.e., 
    $\mE_{\mathcal{D}}[\nabt \rpod] = \nabt \truePG$.
\end{corollary}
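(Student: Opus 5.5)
The corollary should follow almost directly from Theorem~\ref{thm:bias_of_rpod-pg}. The plan is to show that the bias in Eq.~\eqref{eq:bias_of_rpod-pg} vanishes term by term under Condition~\ref{ass:pair_correct}. Since the estimator's bias is, by definition, $\mE_{\mathcal{D}}[\nabt \rpod] - \nabt \truePG$, a zero bias is exactly the claimed identity.

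First, Condition~\ref{eq:full_top-k_support} holds by hypothesis, so Theorem~\ref{thm:bias_of_rpod-pg} applies. It gives the bias as an expectation over $(\bx,\bc_{1:k})\sim p(\bx)\pi_0^{1st}(\bc_{1:k}|\bx)$ of a finite sum. That sum runs over pairs $\ba<\bb$ of rankings with $\ba_{1:k}=\bb_{1:k}=\bc_{1:k}$.

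Second, observe that every pair appearing in the sum shares the same top-$k$ actions. Condition~\ref{ass:pair_correct} therefore applies to each such pair and gives $\Delta_{q}(\bx,\ba,\bb)-\Delta_{\hat{f}}(\bx,\ba,\bb)=0$ for every $\bx$.

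The remaining factors in each summand are the logging second-stage probabilities, the difference of conditional importance ratios, and the score $\score(\bx,\bc_{1:k})$. Each of these is finite:
\begin{itemize}
\item the logging probabilities lie in $[0,1]$;
\item the conditional ratios are well defined because full top-$k$ support makes the conditioning on $\bc_{1:k}$ legitimate, and the second-stage denominators are positive on the pairs carrying weight;
\item the score is finite wherever $\pi_\theta^{1st}$ is differentiable with positive probability.
\end{itemize}
Hence every summand equals zero, the finite inner sum equals zero, and so the outer expectation is zero.

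The only delicate point is well-definedness. If $\pi_0^{2nd}(\ba_{k+1:L}|\bx,\bc_{1:k})=0$ for some bottom sequence, the corresponding ratio is undefined. However, that summand is multiplied by the zero logging probability, and in any case by the vanishing CPC factor. Under the convention $0\cdot(\cdot)=0$ used implicitly in the theorem, such terms contribute nothing. I would note this explicitly and then conclude that $\mE_{\mathcal{D}}[\nabt \rpod] = \nabt \truePG$.
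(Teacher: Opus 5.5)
Your proposal is correct and follows the paper's route: the paper obtains the corollary directly from Theorem~\ref{thm:bias_of_rpod-pg}, because every pair in the bias sum shares the top-$k$ actions $\bc_{1:k}$, so CPC makes the factor $\Delta_{q}(\bx,\ba,\bb)-\Delta_{\hat{f}}(\bx,\ba,\bb)$ vanish in every summand. On well-definedness, a $0\cdot(\cdot)=0$ convention is the wrong tool: applied literally to the theorem's formula, it would drop terms that genuinely contribute to the bias when $\pi_{0}^{2nd}$ lacks full support, so instead multiply the logging probabilities into the ratio difference, which removes every denominator and leaves finite factors that the zero CPC factor then annihilates.
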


Thus, the R-POD gradient estimator can be unbiased when the regression model satisfies conditional pairwise correctness, which is less restrictive than aiming for correctly estimating the global reward functions $\{q_l(\bx,\ba)\}_{l=1}^L$ like implicitly assumed for the regression-based approach. The above bias analysis also implies that we should ideally optimize the regression model $\hat{f}(\bx,\ba)$ so that it preserves the relative value differences to minimize the bias of the resulting gradient estimator.

Next, the following calculates the variance of R-POD to show its relation with the accuracy of the regression model.

\begin{proposition} \label{prop:rpod-variance}
    (Variance of the R-POD gradient estimator) Under Conditions~\ref{eq:full_top-k_support} and~\ref{ass:pair_correct}, the variance of the R-POD gradient estimator is given by
    \begin{align}
    \label{eq:rpod-variance}
    &n\mathbb{V}_{\mathcal{D}}(\nabla_\theta\widehat{V}_{\mathrm{RPOD}}^{(j)}(\pi_{\theta,\phi}^{overall}; \mathcal{D}))\notag\\
    &= \mE_{p(\bx)\pi_0(\ba|\bx)}\left[\left(
    w(\bx,\ba_{1:k})
    s_{\theta}^{(j)}(\bx,\ba_{1:k})\right)^2\sigma^2(\bx,\ba)\right]\notag\\
    &\quad+ \mE_{p(\bx)}\left[\mV_{\pi_0(\ba|\bx)}\left[
    w(\bx,\ba_{1:k})
    \Delta_{q,\hat{f}}(\bx,\ba)
    s_{\theta}^{(j)}(\bx,\ba_{1:k})\right]\right]\notag\\
    &\quad+ \mV_{p(\bx)}\left[\mE_{\pi_{\theta}^{1st}(\ba_{1:k}|\bx)}
    \left[q^{\pi_{\phi}^{2nd}}(\bx,\ba_{1:k})
    s_{\theta}^{(j)}(\bx,\ba_{1:k})\right]\right],
\end{align}
where $\Delta_{q,\hat{f}}(\bx,\ba) \coloneq q(\bx,\ba)-\hat{f}(\bx,\ba)$ is the \textit{estimation error} of $\hat{f}(\bx,\ba)$ against the global expected reward function $q(\bx,\ba)$.
\end{proposition}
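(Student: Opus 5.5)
Since the data are i.i.d., $n\mV_{\calD}$ equals the variance of a single summand
\[
G(\bx,\ba,\br) \coloneq w(\bx,\ba_{1:k})\Bigl(\textstyle\sum_l \alpha_l r_l - \hat{f}(\bx,\ba)\Bigr)s_\theta^{(j)}(\bx,\ba_{1:k}) + \hat{g}(\bx),
\]
where $\hat{g}(\bx)\coloneq \mE_{\pi_\theta^{1st}(\ba_{1:k}|\bx)}[\hat{f}^{\pi_\phi^{2nd}}(\bx,\ba_{1:k})s_\theta^{(j)}(\bx,\ba_{1:k})]$, under $(\bx,\ba,\br)\sim p(\bx)\pi_0(\ba|\bx)p(\br|\bx,\ba)$. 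The plan is to apply the law of total variance twice, conditioning first on $(\bx,\ba)$ and then on $\bx$:
\[
\mV[G] = \mE_{p(\bx)\pi_0}\bigl[\mV[G|\bx,\ba]\bigr] + \mE_{p(\bx)}\bigl[\mV_{\pi_0(\ba|\bx)}[\mE[G|\bx,\ba]]\bigr] + \mV_{p(\bx)}\bigl[\mE[G|\bx]\bigr].
\]

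For the first term, only $\sum_l\alpha_l r_l$ is random given $(\bx,\ba)$. Hence $\mV[G|\bx,\ba]=(w(\bx,\ba_{1:k})s_\theta^{(j)}(\bx,\ba_{1:k}))^2\sigma^2(\bx,\ba)$, which is the first line of Eq.~\eqref{eq:rpod-variance}.

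For the second term, $\mE[G|\bx,\ba] = w(\bx,\ba_{1:k})\Delta_{q,\hat{f}}(\bx,\ba)s_\theta^{(j)}(\bx,\ba_{1:k}) + \hat{g}(\bx)$, using $q(\bx,\ba)=\sum_l\alpha_l q_l(\bx,\ba)$. The additive $\hat{g}(\bx)$ is constant given $\bx$, so it drops out of $\mV_{\pi_0(\ba|\bx)}$. This yields the second line.

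For the third term, we need $\mE[G|\bx]=\mE_{\pi_\theta^{1st}(\ba_{1:k}|\bx)}[q^{\pi_\phi^{2nd}}(\bx,\ba_{1:k})s_\theta^{(j)}(\bx,\ba_{1:k})]$, which is the pointwise-in-$\bx$ version of unbiasedness. This is the main step. I would get it by reusing the computation behind Theorem~\ref{thm:bias_of_rpod-pg} and its Corollary, but carried out conditionally on $\bx$ rather than averaged over $p(\bx)$. That bias derivation expresses $\mE[G|\bx]$ minus the integrand of Proposition~\ref{prop:overall policy gradient} as the inner bracket of Eq.~\eqref{eq:bias_of_rpod-pg} at fixed $\bx$. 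Under Condition~\ref{ass:pair_correct} each pairwise factor $\Delta_q-\Delta_{\hat f}$ vanishes, so the difference is zero for every $\bx$.

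To make this concrete, I would argue directly as follows. Under CPC, $q(\bx,\ba)-\hat f(\bx,\ba)=h(\bx,\ba_{1:k})$ depends only on the top-$k$ block. Condition~\ref{eq:full_top-k_support} then lets us change measure from $\pi_0^{1st}$ to $\pi_\theta^{1st}$. Within a fixed top-$k$ block, I expect the second-stage factors to be exactly the ones that make the conditional distribution of the bottom actions match $\pi_\phi^{2nd}$, and this is where care is needed about whether $w$ involves the second stage. The first term then gives $\mE_{\pi_\theta^{1st}}[h\,s_\theta^{(j)}]$. Adding $\hat{g}(\bx)$ gives $\mE_{\pi_\theta^{1st}}[(h+\hat f^{\pi_\phi^{2nd}})s_\theta^{(j)}]=\mE_{\pi_\theta^{1st}}[q^{\pi_\phi^{2nd}}s_\theta^{(j)}]$. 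Substituting this into the outer variance gives the third line.
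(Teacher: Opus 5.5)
Your proposal is correct and follows the paper's proof. The paper applies the law of total variance twice, reads off the noise term and the error term (where the additive $\hat g(\bx)$ drops out) directly, and handles the third term by using Condition~\ref{ass:pair_correct} to write $q-\hat f$ as a function $g(\bx,\ba_{1:k})$ of the top-$k$ block, changing measure to $\pi_\theta^{1st}$, and recombining via $\hat f^{\pi_\phi^{2nd}} = q^{\pi_\phi^{2nd}} - g$; your plan does the same. The hedge you raise is unnecessary: $w$ is only the top-$k$ weight and $h$ does not depend on $\ba_{k+1:L}$, so the bottom actions simply sum out under $\pi_0^{2nd}$ (they never need to ``match'' $\pi_\phi^{2nd}$, which enters only through $\hat g(\bx)$), and you need not route this step through Theorem~\ref{thm:bias_of_rpod-pg}.
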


Proposition~\ref{prop:rpod-variance} suggests that, in terms of variance minimization, we should optimize the regression model in a way that minimizes $|\Delta_{q,\hat{f}}(\bx,\ba)|$ compared to minimizing $|\Delta_{q} (\bx,\ba,\bb) - \Delta_{\hat{f}} (\bx,\ba,\bb)|$ for the bias. Based on the theoretical observations, an ideal strategy to optimize the regression model $\hat{f}(\bx,\ba)$ would be a two-step procedure to directly optimize the bias and variance of the R-POD gradient estimator in each step. 
Specifically, the first step focuses on minimizing the bias by optimizing a pairwise regression function $\hat{h}_{\phi}(\bx,\ba)$ towards accurately estimating the relative reward differences $|\Delta_{q} (\bx,\ba,\bb) - \Delta_{\hat{f}} (\bx,\ba,\bb)|$, which needs pairwise logged data. The second step then aims for variance minimization via minimizing $|\Delta_{q,\hat{f}}(\bx,\ba)|$. More details of the ideal procedure of two-step regression can be found in Appendix~\ref{two-step procedure}.

Note that we do not expect this ideal two-step procedure to be always feasible in practice due to its need of pairwise logged data and implementation costs. Even if it is impractical, we can still employ a conventional regression to estimate the expected absolute reward to construct the regression model. This can be done by optimizing a parameterized function $\hat{f}_{\phi}: \calX \times \Pi_L(\calA) \rightarrow \mathbb{R}$ via:
\begin{align}
    \min_{\phi} \sum_{(\bx,\ba,\br) \in \calD} \ell_f \big(\br, \hat{f}_{\phi} (\bx,\ba) \big), 
    \label{eq:one_step_regression}
\end{align}
 and $\hat{f}_{\phi}(\bx,\ba)$ is used in Eq.~\eqref{eq:rpod-pg}. $\ell_f$ is a loss function to measure the accuracy of $\hat{f}_{\phi}(\bx,\ba)$, which can be defined, for example, as $\ell_f \big(\br, \hat{f}_{\phi} (\bx,\ba) \big) = \big(\sum_{l=1}^L \alpha_l r_l - \hat{f}_{\phi} (\bx,\ba) \big)^2$. 
 Even with this practical and simple procedure, the R-POD gradient estimator retains advantages over existing policy gradient estimators by its significant variance reduction. Section~\ref{sec:experiment} empirically demonstrates that R-POD performs more effectively than existing approaches with this practical regression procedure.

\subsection{Optimizing the Second-stage Policy $\pi_{\phi}^{2nd}$}
The objective of the second-stage policy $\pi_{\phi}^{2nd}$ is to identify the bottom actions to optimize the expected ranking metric given the top-$k$ actions chosen by the first-stage policy $\pi_{\theta}^{1st}$. Specifically, the second-stage policy should rank the bottom actions that yield the highest value among rankings containing the same top-$k$ actions. From the regression procedure mentioned earlier, we have already obtained $\hat{f}_{\phi}(\bx,\ba)$ to estimate the reward function via the regression-based approach. This allows us to readily define the second-stage policy based on $\hat{f}_{\phi}(\bx,\ba)$, for example, as follows.
\begin{align}
    \label{train pi_2}
    \pi_{\phi}^{2nd}(\ba_{k+1:L}|\bx,\ba_{1:k}) \coloneq
   \begin{cases}
      1\quad (\ba = \argmax_{\ba':\ba'_{1:k}=\ba_{1:k}} \hat{f}_{\phi}(\bx,\ba'))\\
      0\quad (otherwise)
   \end{cases}
\end{align}
When the regression model satisfies the CPC condition, the above second-stage policy is optimal because CPC ensures a regression model to accurately estimate the relative value of rankings that share the same top-$k$ actions. This is a more relaxed modeling requirement compared to the existing regression-based approach.

\subsection{The Overall R-POD Algorithm}
The overall process of the R-POD algorithm is conducted as follows. 
First, we construct the regression model $\hat{f}_{\phi}(\bx,\ba)$, for example, via performing Eq.~\eqref{eq:one_step_regression}. We then formulate the second-stage policy $\pi_{\phi}^{2nd}$ based on the regressor $\hat{f}_{\phi}(\bx,\ba)$ as in Eq.~\eqref{train pi_2}. We also use $\hat{f}_{\phi}(\bx,\ba)$ and optimize the first-stage policy $\pi_{\theta}^{1st}$ via iterative gradient ascent using the R-POD gradient estimator in Eq.~\eqref{eq:rpod-pg}. 

Once we obtain first- and second-stage policies via the R-POD algorithm, for an incoming context $\bx$ in the inference phase, we first sample top-$k$ actions from the 1st-stage policy as $\ba_{1:k} \sim \firstpolicy$. We then apply the 2nd-stage policy to rank the bottom actions given the top-$k$ actions as $\ba_{k+1:L} \sim \secondpolicy$. This procedure is equivalent to sampling a ranking from the joint distribution induced by $\firstp$ and $\secondp$, i.e., $\ba \sim \overallpolicy$.

\begin{figure}[t]
     \centering
     \includegraphics[width=0.90\linewidth]{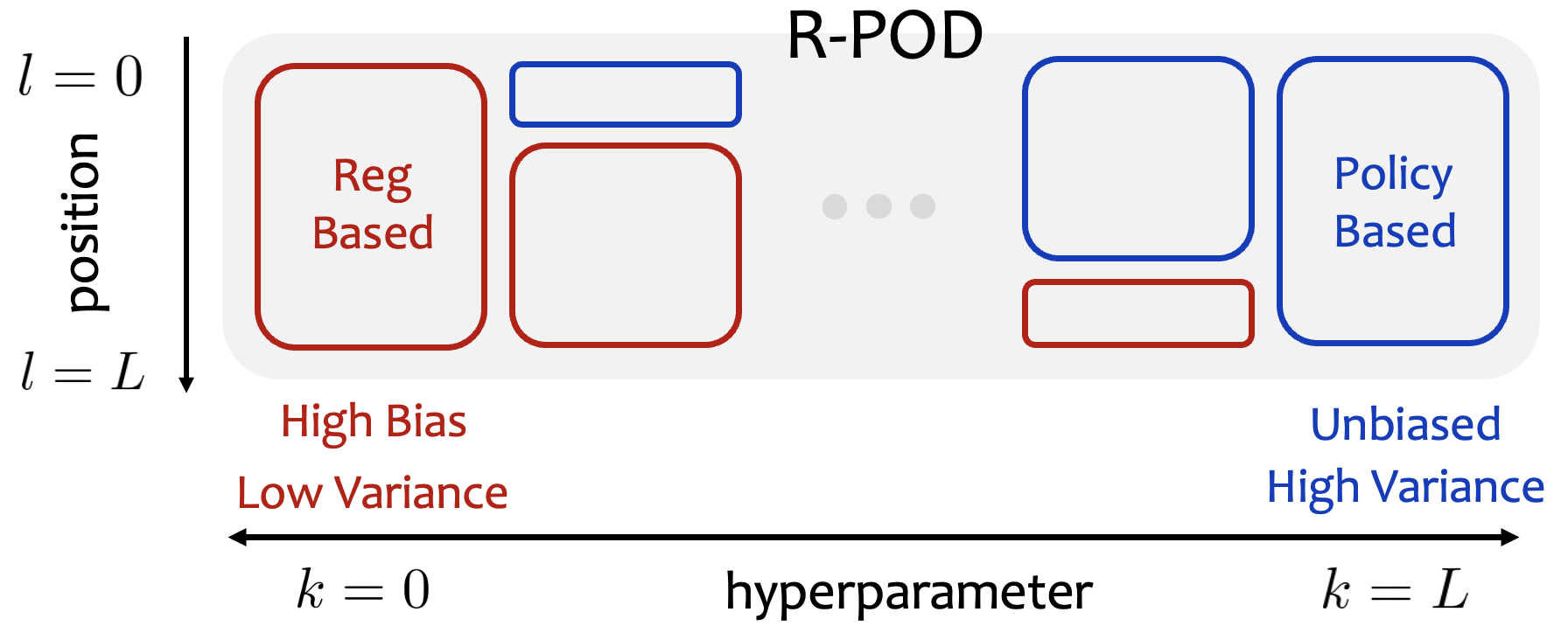}
     \caption{The R-POD algorithm mixes policy- and regression-based approaches via its hyperparameter $k$. When $k=L$, R-POD reduces to the policy-based approach, while it reduces to the regression-based approach with $k=0$.}
     \label{fig:role_of_k}
     \vspace{-3mm}
\end{figure}

\subsection{The Role of Hyperparameter $k$ in R-POD}
The hyperparameter $k$ in R-POD plays a crucial role in deciding the effectiveness of the algorithm. When $k$ is large, the bias of the gradient estimator for the first-stage policy is expected to be small. This is because decreasing the number of bottom actions makes CPC milder.
In the extreme case where $k=L$, the R-POD gradient estimator becomes unbiased irrespective of the accuracy of the regression model because CPC requires nothing. In contrast, the variance of the R-POD gradient estimator may increase because a larger number of top-$k$ actions leads to higher variance in the top-$k$ importance weight. Conversely, when $k$ is small, the variance of the R-POD gradient estimator decreases while its bias increases.
It should be noted that when $k=L$, the first-stage policy becomes identical to the overall policy and thus the R-POD algorithm reduces to the policy-based approach. In contrast, when $k=0$, the second-stage policy becomes identical to the overall policy and thus R-POD reduces to the regression-based approach. This provides an intriguing interpretation of the hyperparameter $k$ as the mixture ratio of the policy- and regression-based approaches in R-POD as described in Figure~\ref{fig:role_of_k}. In practice (and in the following experiments), we can tune this key hyperparameter based on a hold-out estimate of the policy value based on OPE estimators such as IPS or DR.

\section{Empirical Evaluation} \label{sec:experiment}
This section empirically evaluates R-POD on both synthetic and public ranking datasets. Note that, in our experiments, we focused on settings with a unique action space of less than 100 ($|\calA| \le 100$). Existing methods cannot even handle this problem due to their severe variance issues. Indeed, many relevant work around off-policy ranking focus on the problem with a similar or smaller size~\cite{li2018offline,mcinerney2020counterfactual,kiyohara2022doubly,kiyohara2023off,kiyohara2024off}. It is also true that, when $|\calA|=100$ and $L=5$, the number of unique ranking is $|\Pi_L(\calA)| > 10^9$, which is massive. In particular, our real-world experiments reflect this vast ranking space. We believe that OPL for ranking policies with even larger ranking spaces, potentially leveraging structure in $\calA$ as studied by~\cite{saito2022off,saito2023off,sachdeva2023off,cief2024learning,kiyohara2024off}, would be a valuable future direction.

\begin{figure*}[t]
  \centering
  \includegraphics[width=0.85\textwidth]{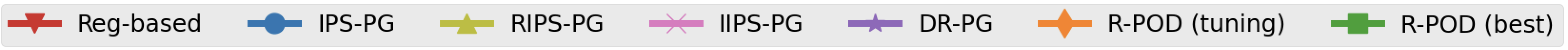}
  \label{fig:legend}
  \vspace{1mm}
  \begin{subfigure}[a]{0.24\textwidth} 
    \centering
    \includegraphics[width=\linewidth]{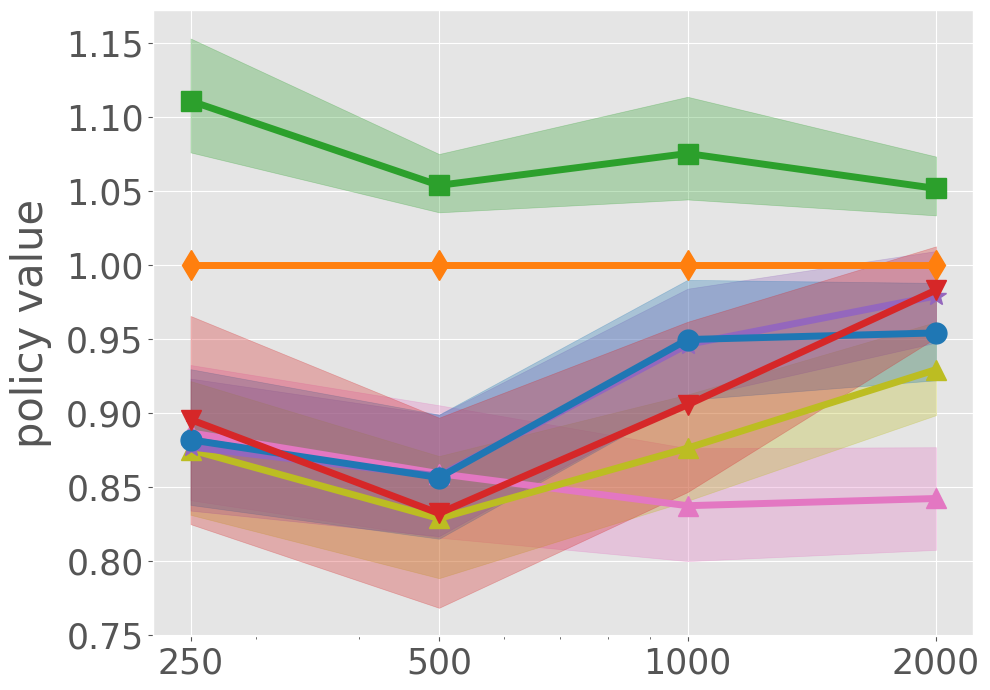}
    \caption{Training data size ($n$)}
    \vspace{1em}
    \label{fig:training_data}
  \end{subfigure}
  \begin{subfigure}[a]{0.24\textwidth}
    \centering
    \includegraphics[width=\linewidth]{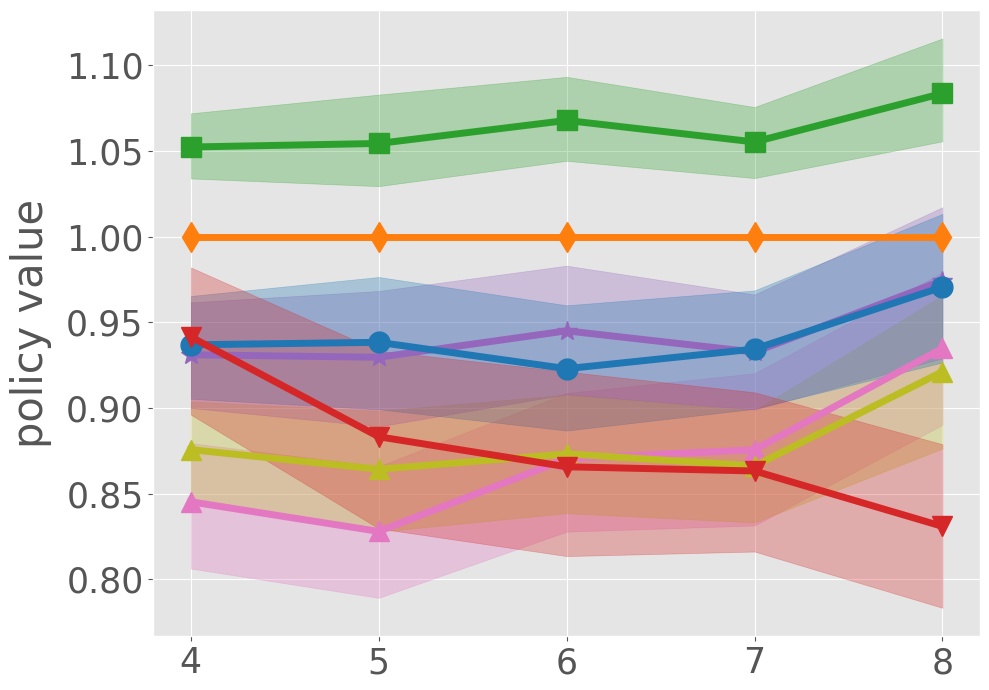}
    \caption{Number of unique actions ($|\calA|$)}
    \vspace{1em}
    \label{fig:unique_action}
  \end{subfigure}
  \begin{subfigure}[a]{0.24\textwidth}
    \centering
    \includegraphics[width=\linewidth]{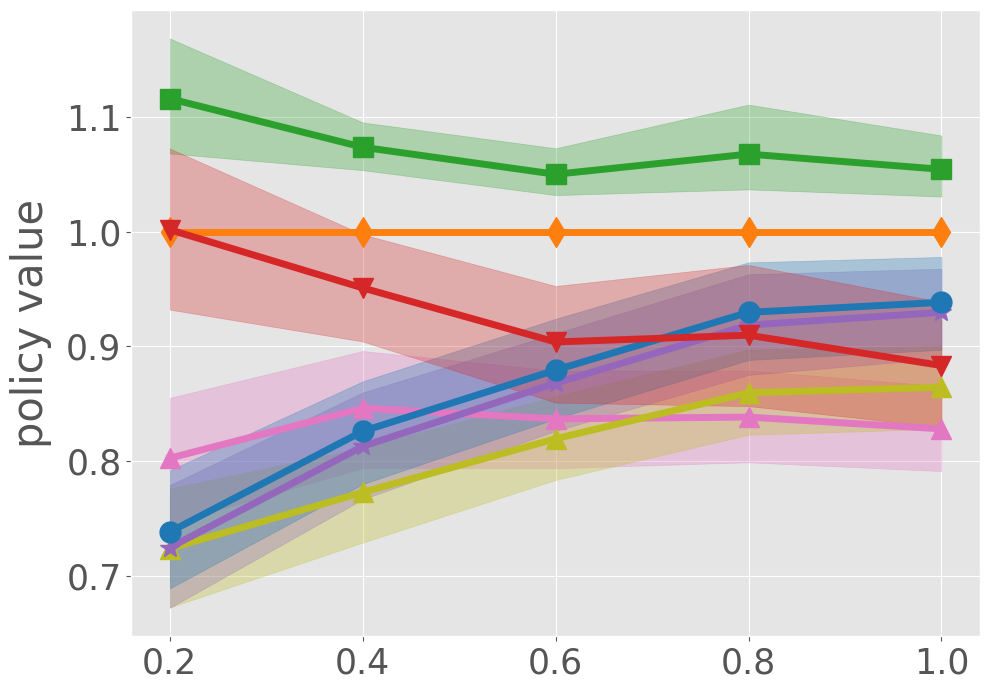}
    \caption{Temperature parameter of the logging policy ($\tau$)}
    \label{fig:behavior_tau}
  \end{subfigure}
  \begin{subfigure}[a]{0.24\textwidth}
    \centering
    \includegraphics[width=\linewidth]{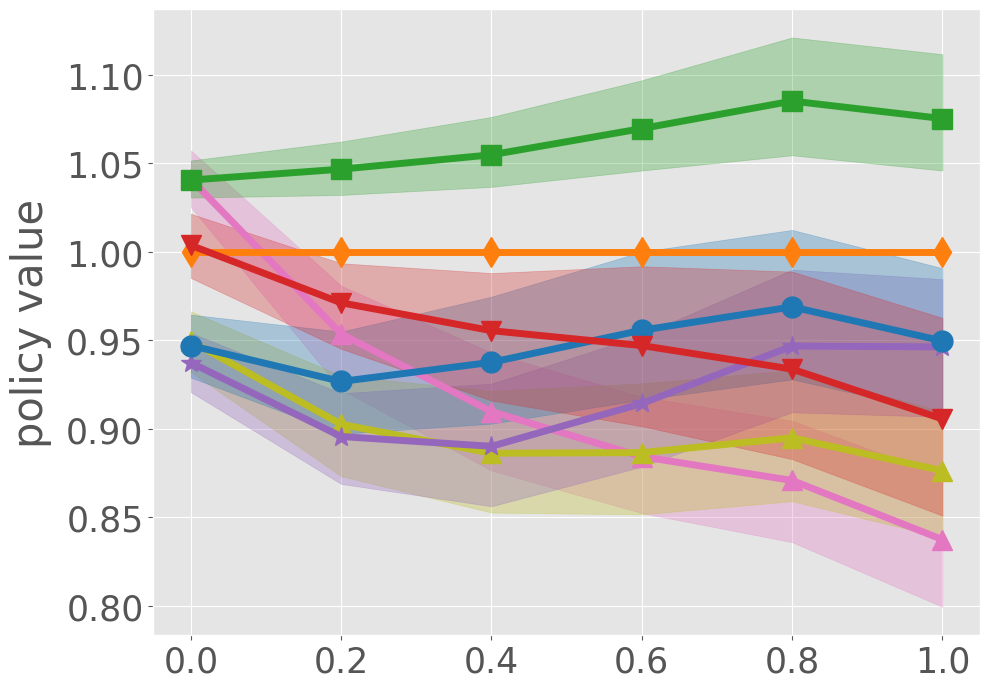}
    \caption{Interaction parameters ($\lambda$)}
    \vspace{1em}
    \label{fig:probability_affected}
  \end{subfigure}
  \vspace{-3mm}
  \caption{Comparing the test policy values (normalized by R-POD (tuning)) of the OPL methods, with varying (a) training data sizes, (b) numbers of actions, (c) temperature parameters of the logging policy, and (d) extents of interactions.} \label{fig:synthetic_results}
\end{figure*}

\subsection{Synthetic Data}
\label{subsec:synthetic}
To generate synthetic datasets, we sample 5-dimensional contexts from the standard normal distribution. 
Then, for each context-ranking pair, we first synthesize the expected reward function for each position $l \,(1\le l \le L)$ in a ranking as 
\begin{align}
    q_l(\bx,\ba) := \tilde{q}_l(\bx, a_l) + F (\bx, \ba), \label{eq:q_func}
\end{align}
where $\tilde{q}_l(\bx, a_l)$ is called the base reward function and defines the value of action $a_l$ presented at the corresponding position $l$, while $F (\bx, \ba)$ depends on the whole ranking $\ba$, introducing interactions and violates typical behavior assumptions like cascade and independence.
Specifically, the former term is defined as $\tilde{q}_l(\bx, a_l):=\theta_{a_l}^{\top} \bx + b_{a_l}$, where $\theta_{a_l}$ is a parameter vector sampled from the standard normal distribution and $b_{a_l}$ is a bias term defined uniquely for action $a_l$.
In contrast, the interaction term is defined as $F (\bx, \ba) = \sum_{m \neq l} \mathbb{W}(a_m, a_l) \indicator{X_l \leq \lambda}$, where $\mathbb{W}(a_m, a_l)$ indicates the effect of action $a_m$ on the reward of action $a_l$. $X_l$ is a random variable sampled from the standard uniform distribution, and $\lambda \in [0, 1]$ is a parameter to control the extent of interaction. We then sample the reward $r_l$ from a normal distribution, whose mean is $q_l(\bx,\ba)$ and standard deviation $\sigma$ is 0.5. 

We define the logging policy that produces the logged data based on the Plackett-Luce model~\cite{plackett1975analysis} as follows.
\begin{align}
    \pi_0(\ba | \bx) = \prod_{l=1}^L 
 \frac{\exp(f_0(\bx, a_l) / \tau)\mathbb{I}[a_l \notin \ba_{1:l-1}]}{\sum_{a' \in \mathcal{A}\backslash \ba_{1:l-1}} \exp(f_0(\bx, a')/ \tau)} \label{eq:plackett-luce}
\end{align}

where $f_0(\bx, a_l) = {\tilde{\theta}_{a_l}}^{\top} \bx + \tilde{b}_{a_l}$ and $\tau$ is a temperature parameter. We sample both ${\tilde{\theta}_{a_l}}$ and $\tilde{b}_{a_l}$ from the standard uniform distribution.

\paragraph{\textbf{Compared Methods.}}
We compare R-POD with IPS-PG, DR-PG, RIPS-PG~\cite{mcinerney2020counterfactual}, IIPS-PG~\cite{li2018offline}, and Regression-based approach (Reg-based). 
To determine the hyperparameter $k$ for R-POD, we perform grid-search in range $k \in [0, L]$ based on the policy value estimated by IPS in a hold-out set. R-POD with data-driven tuning of $k$ is denoted as \textbf{``R-POD (tuning)''} in our experiment results. We also report the results of \textbf{``R-POD (best)''}, which uses the hyperparameter $k$ with the best ground-truth policy value and provides the best achievable value as a reference.

\paragraph{\textbf{Results}}
Figure \ref{fig:synthetic_results} compares the value of policies learned by each OPL method over 100 simulations with different random seeds. Each figure in Figure~\ref{fig:synthetic_results} compares the policy learning effectiveness with varying data sizes, numbers of unique actions, the temperature parameters of the logging policy, and the interaction parameters, where the default parameters are $n$ = 1000, $|\calA| = 5$, $L = 3$, $\tau=1.0$, and $\lambda = 1.0$, respectively.

First, Figure \ref{fig:training_data}, which varies the training data size $n$ from 250 to 2000, shows
R-POD (tuning) performs consistently better than the baseline methods across various data sizes. The advantage of R-POD over the baselines becomes particularly large when the data size is small, suggesting that R-POD effectively achieves a substantial reduction in variance regarding policy gradient estimation to enable a more data-efficient OPL for ranking policies. It would also be interesting to see that R-POD (tuning) performs competitively compared to R-POD (best), particularly when the training data size is large, even though R-POD (best) always performs even better than R-POD (tuning) leveraging its unfair access to the ground-truth policy value to identify the optimal value of $k$.

Next, when varying the numbers of unique actions $|\calA|$ from 4 to 8 (this varies the number of unique rankings from 24 to 336) in Figure \ref{fig:unique_action}, we observe that R-POD outperforms the baseline methods in all situations. This suggests that R-POD can perform satisfactorily even when the number of candidate rankings grows within the evaluated range.

In addition, Figure \ref{fig:behavior_tau} demonstrates that the effectiveness of the policy-based methods such as IPS- and DR-PG worsen for small $\tau$. This is because the ranking-level importance weight is likely to be large as the logging policy becomes close to deterministic with small $\tau$. It is appealing to see that R-POD (tuning) performs much better than the policy-based methods particularly for small $\tau$ by significantly reducing the variance by its top-$k$ importance weighting.
\begin{figure*}[h]
  \centering
  \includegraphics[width=0.8\textwidth]{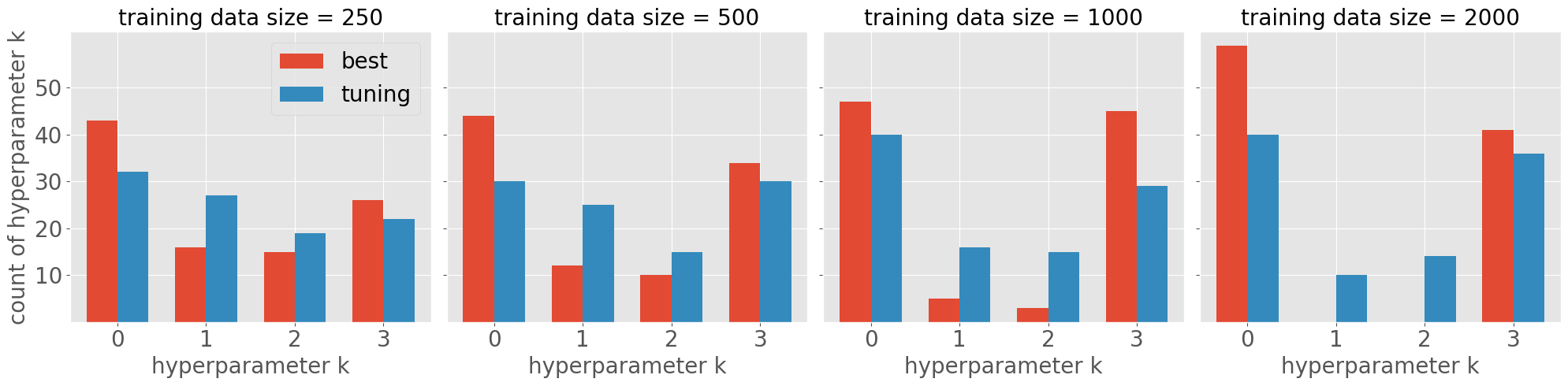}
  \vspace{-3mm}
  \caption{Histograms of hyperparameter $k$ selected by R-POD (best) and R-POD (tuning) over 100 simulations.}\label{fig:topk_count}
\end{figure*}

Finally, Figure~\ref{fig:probability_affected} shows RIPS-PG and IIPS-PG degrade in performance with larger violations of the cascade and independence assumptions (larger $\lambda$) because they ignore ranking interactions. In contrast, R-POD is much more robust to the violations of those assumptions because it unbiasedly estimates the rewards of top-$k$ action via top-$k$ importance weighting without assumptions and also considers the interaction effect from lower positions using the regression model.

Next, we provide ablation results about the selection of the hyperparameter $k$ for R-POD.
Figure \ref{fig:topk_count} reports the number of trials in which each value of hyperparameter $k$ is selected by R-POD (best) and R-POD (tuning). 
The figure demonstrates that the data-driven hyperparameter selection often aligns with the oracle selection of R-POD (best) in many cases, even though R-POD (tuning) does not perfectly match the choice of R-POD (best). Moreover, an interesting observation is that while the choice is polarized to $k=0$ (where R-POD reduces to Reg-based) and $k=3$ (where R-POD reduces to Policy-based) with large data sizes, using intermediate values such as $k=1,2$ can be effective when the data size is small. 
This indicates that, particularly when Reg-based suffers from reward estimation errors and policy-based methods produce high variance with less data, mixing these approaches with R-POD becomes more superior and enables a robust and efficient OPL of ranking policies.

\begin{table*}[t]
\centering
\caption{Comparison of the OPL methods on Yahoo! LETOR (DCG@3) and MSLR-WEB10K (DCG@5). $f_0(x, a)$ in the logging policy uses constant value (uniform) or random forest regression. Values outside and inside the parenthesis are the mean and the standard deviation based on 5 random seeds, respectively.}
\vspace{-2mm}
\renewcommand{\arraystretch}{1.2}
\resizebox{0.68\textwidth}{!}{ 
\begin{tabular}{lcccc} 
\hline
& \multicolumn{2}{c}{\underline{\textbf{Yahoo! LETOR (DCG@3)}}} & \multicolumn{2}{c}{\underline{\textbf{MSLR-WEB10K (DCG@5)}}} \\
& \multicolumn{2}{c}{Logging Policy} & \multicolumn{2}{c}{Logging Policy} \\
\cline{2-3} \cline{4-5}
OPL methods & uniform & random forest & uniform & random forest \\
\hline
Reg-based & 1.660 (0.154) & 1.613 (0.100) & \bf{2.318 (0.237)} & 2.256 (0.269) \\
IPS-PG & 1.568 (0.090) & 1.648 (0.071) & 1.885 (0.060) & 2.016 (0.066) \\
RIPS-PG & 1.576 (0.092) & 1.625 (0.074) & 2.043 (0.041) & 2.078 (0.065) \\
IIPS-PG & 1.602 (0.047) & 1.571 (0.081) & 2.240 (0.077) & 2.227 (0.155) \\
DR-PG & 1.599 (0.103) & 1.672 (0.033) & 1.893 (0.078) & 2.227 (0.100) \\
R-POD (tuning, \textbf{large} err.) & 1.651 (0.068) & 1.651 (0.041) & 2.274 (0.159) & 2.229 (0.175) \\
R-POD (tuning, \textbf{small} err.) & \bf{1.679 (0.093)} & \bf{1.682 (0.055)} & 2.317 (0.214) & \bf{2.280 (0.088)} \\
R-POD (best) & \bf{1.694 (0.091)} & \bf{1.693 (0.046)} & \bf{2.335 (0.213)} & \bf{2.337 (0.122)} \\
\hline
\end{tabular}
} \label{table:comparison}
\vspace{-3mm}
\end{table*}

\subsection{Real-World Data}
Next, we conduct real-world experiments on two ranking datasets, namely the Microsoft Learning to Rank Challenge dataset (MSLR-WEB10K)~\cite{qin2013introducing} and Yahoo! learning to rank challenge dataset (Yahoo! LETOR)~\cite{chapelle2011yahoo}.  MSLR-WEB10K has 124 documents per query and Yahoo! LETOR has 24 documents per query on average, and we randomly sample 100 documents per query from MSLR-WEB10K and 10 documents from Yahoo! LETOR.

These datasets contain 5-level relevance scores $\textit{rel}(\bx, a) \in \{0, ..., 4\}$
for all of their query$(\bx)$-document$(a)$ pairs. To define the expected reward function for each position and ranking, we first define the base reward function for each unique item as $\tilde{q}_l(\bx, a_l) = \textit{rel}(\bx, a_l)/4 + \eta_{a_l}$ where $\eta_{a_l}$ is a noise parameter sampled separately for each $a_l$ from a normal distribution whose mean is 0 and standard deviation is 0.05. We also synthesize $F (\bx, \ba)$ to introduce interaction effects in a ranking similarly to the synthetic experiment and define the position-wise expected reward function $q_l(\bx, \ba)$ as in Eq.~\eqref{eq:q_func}.
Then, we sample the reward for each position $l$ from a normal distribution with mean $q_l(\bx, \ba)$ and standard deviation $\sigma=0.05$. We use the Plackett-Luce logging policy defined in Eq.~\eqref{eq:plackett-luce}, the same logging policy as used in the synthetic experiment. However, in the real-world experiment, $f_0(\bx, a)$ in the logging policy definition is either a constant value (uniform) or a regression model (random forest regression), which is trained with 10 \% of the training data. Note that we set $L=5$ for MSLR-WEB10K and $L=3$ for Yahoo! LETOR.
R-POD (tuning) tunes its hyperparameter $k$ by a noise-added ground-truth policy value where the noise (or estimation error) is sampled from a uniform distribution of range $(-\Delta_{\text{max}}, \Delta_{\text{max}})$, and $|\Delta_{\text{max}}|$ is either $V(\pi_{\theta})/20$ (\textbf{small} estimation error) or $V(\pi_{\theta})/10$ (\textbf{large} estimation error), varying the accuracy of the tuning of $k$.

\paragraph{\textbf{Results}}
Table \ref{table:comparison} reports the real-world experiment results.
The results demonstrate that R-POD generally outperforms the baseline methods across different datasets and logging policies. Specifically, we observe that R-POD significantly outperforms RIPS-PG and IIPS-PG, even with the tuned hyperparameter $k$ under a \textbf{large} estimation error. Moreover, when comparing R-POD with Reg-based methods, we find that Reg-based is competitive with R-POD only when the reward regressor is accurate (as in MSLR-WEB10K), but it underperforms R-POD when the regression is inaccurate (as in Yahoo! LETOR). These results suggest that R-POD is more robust to regression error than Reg-based, which aligns with our theoretical analysis.
Furthermore, R-POD performs much better than DR-PG and IPS-PG when the action space is large (MSLR-WEB10K), due to its substantial variance reduction in policy gradient estimation without introducing significant bias. We also observe that R-POD (best) consistently achieves the best results, indicating the even greater potential of our R-POD algorithm on real-world datasets, especially with an improved procedure for tuning its key hyperparameter $k$.

\section{related work}
\paragraph{\textbf{Off-Policy Evaluation for Ranking Policies}}
In ranking settings, the action space consists of permutations of unique items, which often causes severe variance in off-policy estimation~\cite{li2018offline,mcinerney2020counterfactual,tanaka2026offpolicy}. Existing estimators reduce this variance by exploiting sub-rewards and assumptions on how users examine ranked lists, often motivated by click models~\cite{guo2009efficient,chuklin2015click}.
IIPS~\cite{li2018offline} and RIPS~\cite{mcinerney2020counterfactual} reduce variance using the independence and cascade assumptions, respectively, but can introduce bias when these assumptions are violated~\cite{mcinerney2020counterfactual,kiyohara2022doubly}. Cascade-DR~\cite{kiyohara2022doubly} incorporates a baseline estimator as a control variate and further reduces variance, while still depending on the cascade view of user behavior.

These estimators impose a single user-behavior assumption across all users, even though user behavior is diverse in practice~\cite{borisov2016neural,xu2012incorporating}. Adaptive IPS (AIPS)~\cite{kiyohara2023off} addresses this issue by choosing adaptive importance weights depending on each user, and provides strong empirical improvements under heterogeneous user behavior. Another line of work views ranking as an episodic reinforcement learning problem by modeling click behavior as an MDP, which enables the use of offline RL estimators~\cite{zhang2023unified}.

Our proposed OPL method is motivated by the observation that higher-ranked items usually dominate the expected reward of a ranking. While this resembles the cascade assumption, our gradient estimator also uses a reward regression model to account for the effects of lower-ranked actions. As a result, it can exploit the structure of rankings without introducing additional bias from a fixed user-behavior assumption. Practical policy-learning methods for top-$k$ recommendation also use behavior assumptions and regularization~\cite{liu2022practical}; since these methods are variants of IIPS and RIPS, our experiments compare R-POD with IIPS and RIPS as representative baselines.

\paragraph{\textbf{Off-Policy Learning}}
Off-policy learning (OPL) for contextual bandits aims to learn a new policy using only logged data \cite{dudik2011doubly,swaminathan2015counterfactual,saito2021counterfactual,tanaka2026off}. Existing methods are commonly divided into regression-based and policy-based approaches. Regression-based methods estimate the expected reward function with supervised learning and select actions with high predicted rewards, but their performance can be biased when the reward model is inaccurate. Policy-based methods directly optimize a parameterized policy using gradient estimates from logged data, typically through importance weighting. These estimators are unbiased under a full-support condition, but this condition is difficult to satisfy in large action spaces. Even when it holds, the resulting importance weights can be extremely large, leading to high variance~\cite{sachdeva2020off,felicioni2022off}. Regularized OPL methods mitigate this issue by penalizing deviations from the logging policy~\cite{jeunen2021pessimistic,ma2019imitation,liang2023local}, but they may also limit improvement because the learned policy remains close to the logging policy.

POTEC~\cite{saito2025potec} addresses large action spaces by decomposing policy optimization into two stages using the Conjunct Effect Model~\cite{saito2023off}. It first selects a promising action cluster with a policy-based method and then chooses an action within that cluster using a regression-based method. Since importance weighting is applied over clusters rather than the original action space, POTEC can substantially reduce variance while retaining favorable bias properties under a local correctness condition. However, its effectiveness depends on constructing useful action clusters.

Our R-POD algorithm is inspired by POTEC in combining policy-based and regression-based optimization, but it decomposes a ranking policy using the intrinsic structure of ranked lists rather than external clusters. Consequently, R-POD does not require learning or tuning a clustering method; it only requires choosing the top-$k$ decomposition point. This makes the bias--variance trade-off easier to control and directly leverages the structure of ranking problems.

\section{Conclusion}
This work studies off-policy learning (OPL) for ranking policies. Existing methods often fail due to substantial bias and variance. To facilitate more effective OPL for ranking, we develop a novel algorithm called R-POD. R-POD optimizes the first-stage policy, responsible for selecting the top-$k$ actions in a ranking, through a new policy gradient estimator. This estimator is unbiased under a relaxed condition about reward regression and exhibits lower variance compared to existing gradient estimators. The second-stage policy, responsible for selecting the bottom actions, is learned via a reward regression. This component of our algorithm is more robust to reward modeling errors than traditional regression-based methods, because the policy gradient part of the algorithm already unbiasedly estimates the value of top-$k$ actions. Empirical evaluations demonstrate the effectiveness of R-POD in optimizing ranking policies, particularly in challenging situations such as with small sample sizes and large ranking spaces.

\appendix

\section{Optimizing the regression model via a two-step procedure}
We can optimize the regression model via a two-step procedure instead of the one-step procedure in Eq.\ref{eq:one_step_regression}. We describe how to implement and use this two-step procedure in the R-POD algorithm.\\
\label{two-step procedure}
\subsection{Two-step Procedure}

Proposition~\ref{prop:rpod-variance} suggests that, in terms of variance minimization, we should optimize the regression model in a way that minimizes $|\Delta_{q,\hat{f}}(\bx,\ba)|$ compared to minimizing $|\Delta_{q} (\bx,\ba,\bb) - \Delta_{\hat{f}} (\bx,\ba,\bb)|$ for the bias. Therefore, based on the theoretical observations, we should ideally optimize the regression model via the following two-step procedure in order to optimize the bias and variance of the R-POD gradient estimator.
\\

\textbf{1. Bias Minimization Step}: Optimize a pairwise regression function $\hat{h}_{\phi}$, parameterized by $\phi$, to estimate the relative value differences of ranking sharing the same top-$k$ actions.
\begin{align}
    \label{eq:b_mini_step}
    \min_{\phi} \sum_{(\bx,\ba,\bb,\br_{\ba},\br_{\bb}) \in \mathcal{D}_{pair}}
    \ell_h \left(\br_{\ba}-\br_{\bb} , h_{\phi}(\bx,\ba)
    -h_{\phi}(\bx,\bb)\right)
\end{align}
\textbf{2. Variance Minimization Step}: Optimize $\hat{g}_{\omega}$, parameterized by $\omega$, to minimize $\Delta_{q,\hat{f}}(\bx,\ba)$ given $\hat{f} = \hat{g}_{\omega} + \hat{h}_{\phi}$.
\begin{align}
    \label{eq:v_mini step}
    \min_{\omega} \sum_{(\bx,\ba,\br) \in \mathcal{D}}
     \ell_g \left(\br , \hat{g}_{\omega}(\bx,\ba) + \hat{h}_{\phi}(\bx,\ba)\right)
\end{align}
$\ell_h$, $\ell_g$ : $\mathbb{R} \times{\mathbb{R}} \to \mathbb{R}_{\geq 0}$ are some appropriate loss functions such as squared loss. Note that $\mathcal{D}_{pair}$ is a dataset augmented for performing pairwise regression to minimize the bias of the R-POD gradient estimator, which is defined as
\begin{align}
    & \mathcal{D}_{pair} := \Bigg\{(\bx,\ba,\bb,\br_{\ba},\br_{\bb}) \mid 
    \begin{array}{l} 
      (\bx_{\ba},\ba,\br_{\ba}), (\bx_{\bb},\bb,\br_{\bb}) \in  \mathcal{D}\\
     \bx = \bx_{\ba} = \bx_{\bb}, \ba_{1:k} = \bb_{1:k} 
    \end{array} 
    \Bigg\}.
\end{align}
As suggested in our analysis, $\hat{h}_{\phi}(\bx,\ba)$ characterizes the bias of the R-POD gradient estimator, so the first step focuses on minimizing its bias by optimizing it towards accurately estimating the relative reward differences. The second step then aims for variance minimization by optimizing $\hat{g}_{\omega}$. Since the bias of the R-POD gradient estimator does not depend on the top-$k$ actions (as in Theorem~\ref{thm:bias_of_rpod-pg}), the second step minimizes its variance without affecting its bias. After performing the two-step regression procedure, we can construct a regression model as $\hat{f}_{\omega,\phi}(\bx,\ba) = \hat{g}_{\omega}(\bx,\ba_{1:k}) + \hat{h}_{\phi}(\bx,\ba)$ that is to be used as a part of the R-POD gradient estimator. 

Note that we do not expect that the two-step procedure described above is always feasible and practical due to its need for pairwise data and implementation costs. If the two-step procedure is impractical due to a lack of sufficient pairwise data, we can employ a regression for the expected absolute reward in Eq. \eqref{eq:one_step_regression}. \\

\subsection{Optimizing the Second-stage Policy $\pi_{\phi}^{2nd}$}

The objective of the second-stage policy $\pi_{\phi}^{2nd}$ is to identify the bottom actions to optimize the value given the top-$k$ actions chosen by the first-stage policy $\pi_{\theta}^{1st}$. Specifically, the second-stage policy should rank the bottom actions that yield the highest value among rankings containing the same top-$k$ actions. Fortunately, from the two-step regression procedure mentioned earlier, we have already obtained $\hat{h}_{\phi}$ aiming to preserve the relative value difference of rankings. This allows us to readily define the second-stage policy based on the pairwise regressor $\hat{h}_{\phi}$ as follows.
\begin{align}
    \label{2step train pi_2}
    \pi_{\phi}^{2nd}(\ba_{k+1:L}|\bx,\ba_{1:k}) \coloneq
   \begin{cases}
      1\quad (\ba = \argmax_{\ba':\ba'_{1:k}=\ba_{1:k}} \hat{h}_{\phi}(\bx,\ba'))\\
      0\quad (otherwise)
   \end{cases}
\end{align}

The second-stage policy is constructed solely based on pairwise regressor $\hat{h}_{\phi}$. Although optimized by the regression-based approach, it only needs to learn the relative value difference of rankings, which is easier than precisely learning the global reward function. Thus, the second-stage policy of our method is expected to produce lower bias compared to regression-based approaches.\\

\section{Omitted Proofs}
\begingroup
\small
\allowdisplaybreaks
Here, we provide the derivations and proofs omitted in the main text.
\subsection{Derivation of Eq. \eqref{eq:truePG}}
\label{derivation of overall gra}
\begin{proof}
    We derive the overall policy gradient in Eq. \eqref{eq:truePG}.
    \begin{align}
        &\nabla_{\theta}V(\pi_{\theta,\phi}^{overall})\notag\\
        &=\nabla_{\theta} \mathbb{E}_{p(\bx)\overallpolicy}\left[\sum_{l=1}^{L} \alpha_{l}q_l(\bx,\ba)\right]\notag\\
        &=\nabla_{\theta} \mathbb{E}_{p(\bx)\firstpolicy\secondpolicy}
        \left[\sum_{l=1}^{L} \alpha_{l}q_l(\bx,\ba)\right]\notag\\
        &=\mathbb{E}_{p(\bx)}
        \left[\sum_{\ba_{1:k}}\sum_{\ba_{k+1:L}}\nabla_{\theta} \firstpolicy\secondpolicy
        \sum_{l=1}^{L} \alpha_{l}q_l(\bx,\ba)\right]\notag\\
        &=\mathbb{E}_{p(\bx)\firstpolicy}
        \left[s_\theta(\bx,\ba_{1:k}) \mathbb{E}_{\secondpolicy}
        \left[\sum_{l=1}^{L} \alpha_{l}q_l(\bx,\ba)\right]\right]\notag\\
        &=\mathbb{E}_{p(\bx)\firstpolicy}
        \left[q^{\pi_{\phi}^{2nd}}(\bx,\ba_{1:k}) s_\theta(\bx,\ba_{1:k})\right]
    \end{align}
    where we use $\mathbb{E}_{\secondpolicy}\left[\sum_{l=1}^{L} \alpha_{l}q_l(\bx,\ba)\right] 
    \coloneqq q^{\pi_{\phi}^{2nd}}(\bx,\ba_{1:k})$ and 
    $\nabla_{\theta}\log{\firstpolicy} \coloneqq s_\theta(\bx,\ba_{1:k})$.
\end{proof}

\subsection{Derivation of Eq. \eqref{eq:bias_of_rpod-pg}}
\begin{proof}
    We derive the bias of R-POD in Eq. \eqref{eq:bias_of_rpod-pg}.
    \begin{align}
        &Bias(\nabla_\theta\widehat{V}_{RPOD}(\pi_{\theta,\phi}^{overall}; \mathcal{D}))\notag\\
        &=\mathbb{E}_{p(\bx)\loggingfirstpolicy\loggingsecondpolicy}\\
        &\qquad \qquad \left[w(\bx,\ba_{1:k})
        \left(\sum_{l=1}^{L}\alpha_{l}q_{l}(\bx,\ba) - \hat{f}(\bx,\ba)\right)s_\theta(\bx,\ba_{1:k})\right]\notag\\
        &\quad -\mathbb{E}_{p(\bx)\pi_{\theta}^{1st}(\ba_{1:k}|\bx)}
        \left[\left(q^{\pi_{\phi}^{2nd}}(\bx,\ba_{1:k})
        -\hat{f}^{\pi_{\phi}^{2nd}}(\bx,\ba_{1:k})\right)
        \times s_\theta(\bx,\ba_{1:k})\right]\notag\\    
        &=\mathbb{E}_{p(\bx)\loggingfirstpolicy}
        \left[w(\bx,\ba_{1:k})s_\theta(\bx,\ba_{1:k})
        \sum_{\ba_{k+1:L}}\loggingsecondpolicy \right.\notag\\
        &\qquad\qquad\left.
        \times\left(\sum_{l=1}^{L}\alpha_{l}q_{l}(\bx,\ba) - \hat{f}(\bx,\ba)\right)\right]\notag\\
        &\quad-\mathbb{E}_{p(\bx)}\left[
        \sum_{\ba_{1:k}}\firstpolicy
        \left(q^{\pi_{\phi}^{2nd}}(\bx,\ba_{1:k})
        -\hat{f}^{\pi_{\phi}^{2nd}}(\bx,\ba_{1:k})\right) s_\theta(\bx,\ba_{1:k})\right]\notag\\
        &=\mathbb{E}_{p(\bx)\loggingfirstpolicyc}\left[s_\theta(\bx,\bc_{1:k})
        \sum_{\ba:\ba_{1:k}=\bc_{1:k}} w(\bx,\ba) 
        \pi_{0}^{2nd}(\ba_{k+1:L}|\boldsymbol{x},\bc_{1:k}) \right.\notag\\
        &\qquad\qquad\left.
        \times\sum_{\bb:\bb_{1:k}=\bc_{1:k}}
        \pi_{0}^{2nd}(\bb_{k+1:L}|\boldsymbol{x},\bc_{1:k})\left(\sum_{l=1}^{L}\alpha_{l}q_{l}(\bx,\bb) - \hat{f}(\bx,\bb)\right)\right]\notag\\
        &\quad -\mathbb{E}_{p(\bx)\loggingfirstpolicyc}
        \left[ w(\bx,\bc_{1:k}) s_\theta(\bx,\bc_{1:k})
        \sum_{\ba:\ba_{1:k}=\bc_{1:k}}\right.\notag\\
        &\qquad\qquad\left.
        \times \pi_{\phi}^{2nd}(\ba_{k+1:L}|\boldsymbol{x},\bc_{1:k})\left(\sum_{l=1}^{L}\alpha_{l}q_{l}(\bx,\ba) - \hat{f}(\bx,\ba)\right)\right] \label{expectation q}\\
        &=\mathbb{E}_{p(\bx)\loggingfirstpolicyc}\left[s_\theta(\bx,\bc_{1:k})
        \sum_{\ba:\ba_{1:k}=\bc_{1:k}} w(\bx,\ba) 
        \pi_{0}^{2nd}(\ba_{k+1:L}|\boldsymbol{x},\bc_{1:k}) \right.\notag\\
        &\qquad\qquad\left.
        \times\sum_{\bb:\bb_{1:k}=\bc_{1:k}}
        \pi_{0}^{2nd}(\bb_{k+1:L}|\boldsymbol{x},\bc_{1:k})
        \Delta_{q,\hat{f}}(\bx,\bb)\right]\notag\\
        &\quad -\mathbb{E}_{p(\bx)\loggingfirstpolicyc}
        \left[ s_\theta(\bx,\bc_{1:k})
        \sum_{\ba:\ba_{1:k}=\bc_{1:k}} w(\bx,\bc_{1:k})\right.\notag\\
        &\qquad\qquad\left.
        \times \pi_{\phi}^{2nd}(\ba_{k+1:L}|\boldsymbol{x},\bc_{1:k})
        \Delta_{q,\hat{f}}(\bx,\bb)\right]\notag\\
        &=\mathbb{E}_{p(\bx)\loggingfirstpolicyc}\bigl[s_\theta(\bx,\bc_{1:k})
        \sum_{\ba:\ba_{1:k}=\bc_{1:k}} w(\bx,\ba)
        \pi_{0}^{2nd}(\ba_{k+1:L}|\boldsymbol{x},\bc_{1:k}) \notag\\
        &\qquad\qquad
        \times\biggl(\biggl(\sum_{\bb:\bb_{1:k}=\bc_{1:k}}
        \pi_{0}^{2nd}(\bb_{k+1:L}|\boldsymbol{x},\bc_{1:k})\Delta_{q,\hat{f}}(\bx,\bb)\biggr) - \Delta_{q,\hat{f}}(\bx,\ba)\biggr)\bigr]\notag
    \end{align}
    where $\Delta_{q,\hat{f}}(\bx,\ba) \coloneqq \sum_{l=1}^{L}\alpha_{l}q_{l}(\bx,\ba) - \hat{f}(\bx,\ba)$.\\
    We use $q^{\pi_{\phi}^{2nd}}(\bx,\ba_{1:k})=\mathbb{E}_{\secondpolicy}\left[\sum_{l=1}^{L} \alpha_{l}q_l(\bx,\ba)\right]$ and\\
    $\hat{f}^{\pi_{\phi}^{2nd}}(\bx,\ba_{1:k})
    = \mathbb{E}_{\secondpolicy}\left[\hat{f}(\bx,\ba)\right]$ in Eq. \eqref{expectation q}.\\
    We use Lemma B.1 of \cite{saito2022off} and then get the following.
    \begin{align}
        &\mathbb{E}_{p(\bx)\loggingfirstpolicyc}
        \left[s_\theta(\bx,\bc_{1:k})
        \sum_{\substack{\ba<\bb:\\ \ba_{1:k}=\bb_{1:k}=\bc_{1:k}}}
        \pi_{0}^{2nd}(\ba_{k+1:L}|\bx,\boldsymbol{c}_{1:k})\right.\notag\\
        &\qquad\qquad\left.
        \times\pi_{0}^{2nd}(\bb_{k+1:L}|\bx,\boldsymbol{c}_{1:k})
        \left(\Delta_{q,\hat{f}}(\bx,\ba) - \Delta_{q,\hat{f}}(\bx,\bb)\right)\right.\notag\\
        &\qquad\qquad\left.
        \times\left(w(\bx,\bb) - w(\bx,\ba)\right)\right]\notag\\
        &= \mathbb{E}_{p(\bx)\pi_{0}^{1st}(\bc_{1:k}|\bx)}
        \bigg[\sum_{\substack{\ba<\bb:\\ \ba_{1:k}=\bb_{1:k}=\bc_{1:k}}}
        \pi_{0}^{2nd}(\ba_{k+1:L}|\bx,\boldsymbol{c}_{1:k})\notag\\
        &\qquad\qquad\times
        \pi_{0}^{2nd}(\bb_{k+1:L}|\bx,\boldsymbol{c}_{1:k})\notag\\
        &\qquad\qquad\times
        (\Delta_{q}(\bx,\ba,\bb)-\Delta_{\hat{f}}(\bx,\ba,\bb))
        s_\theta(\bx,\boldsymbol{c}_{1:k})\notag\\
        &\qquad\qquad\times\left(\frac{\pi_{\theta}(\bb|\bx,\boldsymbol{c}_{1:k})}
        {\pi_{0}(\bb|\bx,\boldsymbol{c}_{1:k})}
        -\frac{\pi_{\theta}(\ba|\bx,\boldsymbol{c}_{1:k})}
        {\pi_{0}(\ba|\bx,\boldsymbol{c}_{1:k})}\right)\bigg],\notag
    \end{align}
    where we use $\Delta_{q,\hat{f}}(\bx,\ba) - \Delta_{q,\hat{f}}(\bx,\bb) \Rightarrow$\\
    $\Delta_{q}(\bx,\ba,\bb)-\Delta_{\hat{f}}(\bx,\ba,\bb)$.
\end{proof}

\subsection{Derivation of Proposition \ref{prop:rpod-variance}}
\begin{proof}
    We derive the variance of R-POD under Condition \ref{eq:full_top-k_support}
    and Condition \ref{ass:pair_correct} by applying the law of total variance several times.
    \begin{align}
        &n\mathbb{V}_{\mathcal{D}}(\nabla_\theta\widehat{V}_{RPOD}^{(j)}
        (\pi_{\theta,\phi}^{overall}; \mathcal{D}))\notag\\
        &=\mathbb{E}_{p(\bx)\pi_0(\ba|\bx)}
        \left[\mathbb{V}_{p(\vecr|\bx,\ba)}\left[
         w(\bx,\ba_{1:k})
        \left(\sum_{l=1}^{L}\alpha_{l}r_{l} - \hat{f}(\bx,\ba)\right)\right.\right.\notag\\
        &\qquad\qquad\left.\left.
        \times s_{\theta}^{(j)}(\bx,\ba_{1:k})
        + \mathbb{E}_{\pi_{\theta}^{1st}(\ba_{1:k}|\bx)}
        \left[\hat{f}^{\pi_{\phi}^{2nd}}(\bx,\ba_{1:k})s_{\theta}^{(j)}(\bx,\ba_{1:k})\right]\right]\right]\notag\\
        &\quad + \mathbb{V}_{p(\bx)\pi_0(\ba|\bx)}
        \left[\mathbb{E}_{p(\vecr|\bx,\ba)}\left[
         w(\bx,\ba_{1:k})
        \left(\sum_{l=1}^{L}\alpha_{l}r_{l} - \hat{f}(\bx,\ba)\right)\right.\right.\notag\\
        &\qquad\qquad\left.\left.
        \times s_{\theta}^{(j)}(\bx,\ba_{1:k})
        + \mathbb{E}_{\pi_{\theta}^{1st}(\ba_{1:k}|\bx)}
        \left[\hat{f}^{\pi_{\phi}^{2nd}}(\bx,\ba_{1:k})s_{\theta}^{(j)}(\bx,\ba_{1:k})\right]\right]\right]\notag\\
        &=\mathbb{E}_{p(\bx)\pi_0(\ba|\bx)}\left[
         \left(w(\bx,\ba_{1:k})s_{\theta}^{(j)}(\bx,\ba_{1:k})\right)^2
         \sigma^2(\bx,\ba)\right]\notag\\
        &\quad + \mathbb{E}_{p(\bx)}
        \left[\mathbb{V}_{\pi_0(\ba|\bx)}\left[
         w(\bx,\ba_{1:k})
         \left(\sum_{l=1}^{L}\alpha_{l}q_{l}(\bx,\ba)-\hat{f}(\bx,\ba)\right)\right.\right.\notag\\
        &\qquad\qquad\left.\left.
        \times s_{\theta}^{(j)}(\bx,\ba_{1:k})
        + \mathbb{E}_{\pi_{\theta}^{1st}(\ba_{1:k}|\bx)}
        \left[\hat{f}^{\pi_{\phi}^{2nd}}(\bx,\ba_{1:k})s_{\theta}^{(j)}(\bx,\ba_{1:k})\right]\right]\right]\notag\\
        &\quad + \mathbb{V}_{p(\bx)}
        \left[\mathbb{E}_{\pi_0(\ba|\bx)}\left[
         w(\bx,\ba_{1:k})
         \left(\sum_{l=1}^{L}\alpha_{l}q_{l}(\bx,\ba)-\hat{f}(\bx,\ba)\right)\right.\right.\notag\\
        &\qquad\qquad\left.\left.
        \times s_{\theta}^{(j)}(\bx,\ba_{1:k})
        + \mathbb{E}_{\pi_{\theta}^{1st}(\ba_{1:k}|\bx)}
        \left[\hat{f}^{\pi_{\phi}^{2nd}}(\bx,\ba_{1:k})s_{\theta}^{(j)}(\bx,\ba_{1:k})\right]\right]\right]\notag\\
        &=\mathbb{E}_{p(\bx)\pi_0(\ba|\bx)}\left[
         \left(w(\bx,\ba_{1:k})s_{\theta}^{(j)}(\bx,\ba_{1:k})\right)^2
         \sigma^2(\bx,\ba)\right]\notag\\
        &\quad
        + \mathbb{E}_{p(\bx)}\left[\mathbb{V}_{\pi_0(\ba|\bx)}\left[
         w(\bx,\ba_{1:k})\Delta_{q,\hat{f}}(\bx,\ba)
         s_{\theta}^{(j)}(\bx,\ba_{1:k})\right]\right]\notag\\
        &\quad + \mathbb{V}_{p(\bx)}
        \left[\mathbb{E}_{\firstpolicy}\left[\mE_{ \loggingsecondpolicy}\left[
         g(\bx,\ba_{1:k}) s_{\theta}^{(j)}(\bx,\ba_{1:k})\right]\right]\right.\notag\\
        &\qquad\qquad\left.
        + \mathbb{E}_{\pi_{\theta}^{1st}(\ba_{1:k}|\bx)}
        \left[\mE_{\pi_{\phi}^{2nd}} 
        \left[\left(q(\bx,\ba)-g(\bx,\ba_{1:k})\right)s_{\theta}^{(j)}(\bx,\ba_{1:k})\right]\right]\right]
        \label{use local correctness}\\
        &=\mathbb{E}_{p(\bx)\pi_0(\ba|\bx)}\left[
         \left(w(\bx,\ba_{1:k})s_{\theta}^{(j)}(\bx,\ba_{1:k})\right)^2
         \sigma^2(\bx,\ba)\right]\notag\\
        &\quad
        + \mathbb{E}_{p(\bx)}\left[\mathbb{V}_{\pi_0(\ba|\bx)}\left[
         w(\bx,\ba_{1:k})\Delta_{q,\hat{f}}(\bx,\ba)
         s_{\theta}^{(j)}(\bx,\ba_{1:k})\right]\right]\notag\\
        &\quad + \mathbb{V}_{p(\bx)}
        \left[\mathbb{E}_{\pi_{\theta}^{1st}(\ba_{1:k}|\bx)}\left[
         q^{\pi_{\phi}^{2nd}}(\bx,\ba_{1:k})
         s_{\theta}^{(j)}(\bx,\ba_{1:k})\right]\right],\notag
    \end{align}
    where we use the conditional pairwise correctness condition as
    $g(\bx,\ba_{1:k}) = q(\bx,\ba) - \hat{f}(\bx,\ba)$ in Eq. \eqref{use local correctness}.
    $s_{\theta}^{(j)}(\bx,\ba_{1:k})$ is the \textit{j}-th dimension of the score function.
\end{proof}
\endgroup

\begin{acks}
Haruka Kiyohara is supported in part by the Funai Overseas Scholarship, Quad Fellowship, and LinkedIn PhD Award.
\end{acks}

\section*{GenAI Usage Disclosure}
The authors used generative AI tools to assist with code development and debugging and to improve the language and clarity of the manuscript. The authors reviewed and verified all AI-assisted code and text and take full responsibility for the content and results of this work.

\bibliographystyle{ACM-Reference-Format}
\bibliography{ref}

@inproceedings{chapelle2011yahoo,
  title={Yahoo! learning to rank challenge overview},
  author={Chapelle, Olivier and Chang, Yi},
  booktitle={Proceedings of the learning to rank challenge},
  pages={1--24},
  year={2011},
  organization={PMLR}
}

@article{qin2013introducing,
  title={Introducing LETOR 4.0 datasets},
  author={Qin, Tao and Liu, Tie-Yan},
  journal={arXiv preprint arXiv:1306.2597},
  year={2013}
}

@article{plackett1975analysis,
  title={The analysis of permutations},
  author={Plackett, Robin L},
  journal={Journal of the Royal Statistical Society Series C: Applied Statistics},
  volume={24},
  number={2},
  pages={193--202},
  year={1975},
  publisher={Oxford University Press}
}

@inproceedings{saito2022off,
  title={Off-Policy Evaluation for Large Action Spaces via Embeddings},
  author={Saito, Yuta and Joachims, Thorsten},
  booktitle={International Conference on Machine Learning},
  pages={19089--19122},
  year={2022},
  organization={PMLR}
}

@inproceedings{jeunen2021pessimistic,
  title={Pessimistic reward models for off-policy learning in recommendation},
  author={Jeunen, Olivier and Goethals, Bart},
  booktitle={Proceedings of the 15th ACM Conference on Recommender Systems},
  pages={63--74},
  year={2021}
}

@inproceedings{ma2019imitation,
  title={Imitation-regularized offline learning},
  author={Ma, Yifei and Wang, Yu-Xiang and Narayanaswamy, Balakrishnan},
  booktitle={The 22nd International Conference on Artificial Intelligence and Statistics},
  pages={2956--2965},
  year={2019},
  organization={PMLR}
}

@inproceedings{saito2021counterfactual,
    author = {Saito, Yuta and Joachims, Thorsten},
    title = {Counterfactual Learning and Evaluation for Recommender Systems: Foundations, Implementations, and Recent Advances},
    year = {2021},
    booktitle = {Proceedings of the 15th ACM Conference on Recommender Systems},
    pages = {828–830},
}

@article{chuklin2015click,
  title={Click Models for Web Search},
  author={Chuklin, Aleksandr and Markov, Ilya and Rijke, Maarten de},
  journal={Synthesis lectures on information concepts, retrieval, and services},
  volume={7},
  number={3},
  pages={1--115},
  year={2015},
  publisher={Morgan \& Claypool Publishers}
}

@inproceedings{guo2009efficient,
  title={Efficient Multiple-Click Models in Web Search},
  author={Guo, Fan and Liu, Chao and Wang, Yi Min},
  booktitle={Proceedings of the 2nd ACM International Conference on Web Search and Data Mining},
  pages={124--131},
  year={2009}
}

@inproceedings{li2018offline,
  title={Offline Evaluation of Ranking Policies with Click Models},
  author={Li, Shuai and Abbasi-Yadkori, Yasin and Kveton, Branislav and Muthukrishnan, S and Vinay, Vishwa and Wen, Zheng},
  booktitle={Proceedings of the 24th ACM SIGKDD International Conference on Knowledge Discovery and Data Mining},
  pages={1685--1694},
  year={2018}
}

@inproceedings{mcinerney2020counterfactual,
  title={Counterfactual Evaluation of Slate Recommendations with Sequential Reward Interactions},
  author={McInerney, James and Brost, Brian and Chandar, Praveen and Mehrotra, Rishabh and Carterette, Benjamin},
  booktitle={Proceedings of the 26th ACM SIGKDD International Conference on Knowledge Discovery and Data Mining},
  pages={1779--1788},
  year={2020}
}

@inproceedings{precup2000eligibility,
  author = {Precup, Doina and Sutton, Richard S. and Singh, Satinder P.},
  title = {Eligibility Traces for Off-Policy Policy Evaluation},
  year = {2000},
  booktitle = {Proceedings of the 17th International Conference on Machine Learning},
  pages = {759–766},
}

@inproceedings{sachdeva2020off,
  title={Off-Policy Bandits with Deficient Support},
  author={Sachdeva, Noveen and Su, Yi and Joachims, Thorsten},
  booktitle={Proceedings of the 26th ACM SIGKDD International Conference on Knowledge Discovery and Data Mining},
  pages={965--975},
  year={2020}
}

@inproceedings{peng2023offline,
  title={Offline Policy Evaluation in Large Action Spaces via Outcome-Oriented Action Grouping},
  author={Peng, Jie and Zou, Hao and Liu, Jiashuo and Li, Shaoming and Jiang, Yibao and Pei, Jian and Cui, Peng},
  booktitle={Proceedings of the ACM Web Conference 2023},
  pages={1220--1230},
  year={2023}
}

@inproceedings{swaminathan2015counterfactual,
  title={Counterfactual risk minimization: Learning from logged bandit feedback},
  author={Swaminathan, Adith and Joachims, Thorsten},
  booktitle={International Conference on Machine Learning},
  pages={814--823},
  year={2015},
  organization={PMLR}
}

@inproceedings{joachims2017unbiased,
  title={Unbiased learning-to-rank with biased feedback},
  author={Joachims, Thorsten and Swaminathan, Adith and Schnabel, Tobias},
  booktitle={Proceedings of the Tenth ACM International Conference on Web Search and Data Mining},
  pages={781--789},
  year={2017}
}

@inproceedings{dudik2011doubly,
  title={Doubly robust policy evaluation and learning},
  author={Dud{\'\i}k, Miroslav and Langford, John and Li, Lihong},
  booktitle={Proceedings of the 28th International Conference on International Conference on Machine Learning},
  pages={1097--1104},
  year={2011}
}

@article{felicioni2022off,
  title={Off-Policy Evaluation with Deficient Support Using Side Information},
  author={Felicioni, Nicolo and Ferrari Dacrema, Maurizio and Restelli, Marcello and Cremonesi, Paolo},
  journal={Advances in Neural Information Processing Systems},
  volume={35},
  year={2022}
}

@inproceedings{kiyohara2022doubly,
  author = {Kiyohara, Haruka and Saito, Yuta and Matsuhiro, Tatsuya and Narita, Yusuke and Shimizu, Nobuyuki and Yamamoto, Yasuo},
  title = {Doubly Robust Off-Policy Evaluation for Ranking Policies under the Cascade Behavior Model},
  booktitle = {Proceedings of the 15th International Conference on Web Search and Data Mining},
  year = {2022},
}

@inproceedings{kiyohara2023off,
  title={Off-Policy Evaluation of Ranking Policies under Diverse User Behavior},
  author={Kiyohara, Haruka and Uehara, Masatoshi and Narita, Yusuke and Shimizu, Nobuyuki and Yamamoto, Yasuo and Saito, Yuta},
  booktitle={Proceedings of the 29th ACM SIGKDD Conference on Knowledge Discovery and Data Mining},
  pages={1154--1163},
  year={2023}
}

@article{sachdeva2023off,
  title={Off-Policy Evaluation for Large Action Spaces via Policy Convolution},
  author={Sachdeva, Noveen and Wang, Lequn and Liang, Dawen and Kallus, Nathan and McAuley, Julian},
  journal={arXiv preprint arXiv:2310.15433},
  year={2023}
}

@inproceedings{borisov2016neural,
  title={A neural click model for web search},
  author={Borisov, Alexey and Markov, Ilya and De Rijke, Maarten and Serdyukov, Pavel},
  booktitle={Proceedings of the 25th International Conference on World Wide Web},
  pages={531--541},
  year={2016}
}

@article{Dud_k_2014,
   title={Doubly Robust Policy Evaluation and Optimization},
   volume={29},
   ISSN={0883-4237},
   url={http://dx.doi.org/10.1214/14-STS500},
   DOI={10.1214/14-sts500},
   number={4},
   journal={Statistical Science},
   publisher={Institute of Mathematical Statistics},
   author={Dudík, Miroslav and Erhan, Dumitru and Langford, John and Li, Lihong},
   year={2014},
   month=nov }

@misc{liang2023local,
      title={Local Policy Improvement for Recommender Systems}, 
      author={Dawen Liang and Nikos Vlassis},
      year={2023},
      eprint={2212.11431},
      archivePrefix={arXiv},
      primaryClass={cs.LG}
}

@inproceedings{xu2012incorporating,
  title={Incorporating revisiting behaviors into click models},
  author={Xu, Danqing and Liu, Yiqun and Zhang, Min and Ma, Shaoping and Ru, Liyun},
  booktitle={Proceedings of the fifth ACM international conference on Web search and data mining},
  pages={303--312},
  year={2012}
}

@inproceedings{ai2018unbiased,
  title={Unbiased learning to rank with unbiased propensity estimation},
  author={Ai, Qingyao and Bi, Keping and Luo, Cheng and Guo, Jiafeng and Croft, W Bruce},
  booktitle={The 41st international ACM SIGIR conference on research \& development in information retrieval},
  pages={385--394},
  year={2018}
}

@inproceedings{wang2018position,
  title={Position bias estimation for unbiased learning to rank in personal search},
  author={Wang, Xuanhui and Golbandi, Nadav and Bendersky, Michael and Metzler, Donald and Najork, Marc},
  booktitle={Proceedings of the eleventh ACM international conference on web search and data mining},
  pages={610--618},
  year={2018}
}

@misc{zhang2023unified,
      title={Unified Off-Policy Learning to Rank: a Reinforcement Learning Perspective}, 
      author={Zeyu Zhang and Yi Su and Hui Yuan and Yiran Wu and Rishab Balasubramanian and Qingyun Wu and Huazheng Wang and Mengdi Wang},
      year={2023},
      eprint={2306.07528},
      archivePrefix={arXiv},
      primaryClass={cs.LG}
}

@inproceedings{liu2022practical,
  title={Practical counterfactual policy learning for top-k recommendations},
  author={Liu, Yaxu and Yen, Jui-Nan and Yuan, Bowen and Shi, Rundong and Yan, Peng and Lin, Chih-Jen},
  booktitle={Proceedings of the 28th ACM SIGKDD Conference on Knowledge Discovery and Data Mining},
  pages={1141--1151},
  year={2022}
}

@inproceedings{cief2024learning,
  title={Learning action embeddings for off-policy evaluation},
  author={Cief, Matej and Golebiowski, Jacek and Schmidt, Philipp and Abedjan, Ziawasch and Bekasov, Artur},
  booktitle={European Conference on Information Retrieval},
  pages={108--122},
  year={2024},
  organization={Springer}
}

@inproceedings{kiyohara2024off,
  title={Off-policy evaluation of slate bandit policies via optimizing abstraction},
  author={Kiyohara, Haruka and Nomura, Masahiro and Saito, Yuta},
  booktitle={Proceedings of the ACM on Web Conference 2024},
  pages={3150--3161},
  year={2024}
}

@inproceedings{
    tanaka2026offpolicy,
    title={Off-Policy Evaluation for Ranking Policies under Deterministic Logging Policies},
    author={Koichi Tanaka and Kazuki Kawamura and Takanori Muroi and Yusuke Narita and Yuki Sasamoto and Kei Tateno and Takuma Udagawa and Wei-Wei Du and Yuta Saito},
    booktitle={The Fourteenth International Conference on Learning Representations},
    year={2026},
    url={https://openreview.net/forum?id=0ZkWWxcHKV}
}

@inproceedings{tanaka2026off,
  title={Off-Policy Learning with Limited Supply},
  author={Tanaka, Koichi and Kishimoto, Ren and Kawagishi, Bushun and Narita, Yusuke and Yamamoto, Yasuo and Shimizu, Nobuyuki and Saito, Yuta},
  booktitle={Proceedings of the ACM Web Conference 2026},
  pages={5908--5919},
  year={2026}
}

@inproceedings{saito2025potec,
  title={POTEC: Off-policy contextual bandits for large action spaces via policy decomposition},
  author={Saito, Yuta and Yao, Jihan and Joachims, Thorsten},
  booktitle={International Conference on Learning Representations},
  volume={2025},
  pages={57640--57664},
  year={2025}
}

@inproceedings{saito2023off,
  title={Off-policy evaluation for large action spaces via conjunct effect modeling},
  author={Saito, Yuta and Ren, Qingyang and Joachims, Thorsten},
  booktitle={international conference on Machine learning},
  pages={29734--29759},
  year={2023},
  organization={PMLR}
}

\end{document}